\documentclass[manuscript,nonacm]{acmart}

\settopmatter{printacmref=false}

\setcopyright{none}
\makeatletter
\renewcommand\footnotetextcopyrightpermission[1]{}
\makeatother

\AtBeginDocument{%
  }

\usepackage{graphicx}
\usepackage{subfig}
\usepackage{algorithm}
\usepackage{algorithmic}
\usepackage{amsthm}
\usepackage{tabularx}
\usepackage{booktabs}
\usepackage{multirow}
\usepackage{rotating}
\usepackage{float}

\newtheorem{assumption}{Assumption}
\newtheorem{corollary}{Corollary}

\begin{document}

\title{A Survey on Self-Play Methods in Reinforcement Learning}

\author{Ruize Zhang}
\affiliation{%
  \institution{Tsinghua University}
  \country{China}
}
\email{jimmyzhangruize@gmail.com}

\author{Zelai Xu}
\affiliation{%
  \institution{Tsinghua University}
  \country{China}
}
\email{zelai.eecs@gmail.com}

\author{Chengdong Ma}
\affiliation{%
  \institution{Peking University}
  \country{China}
}
\email{chengdong.ma@stu.pku.edu.cn}

\author{Chao Yu}
\affiliation{%
  \institution{Tsinghua University}
  \country{China}
}
\email{yuchao@mail.tsinghua.edu.cn}
\authornotemark[2]

\author{Wei-Wei Tu}
\affiliation{%
  \institution{4Paradigm}
  \country{China}
}
\email{tuweiwei@4paradigm.com}

\author{Wenhao Tang}
\affiliation{%
  \institution{Tsinghua University}
  \country{China}
}
\email{tangwh0517@gmail.com}

\author{Shiyu Huang}
\affiliation{%
  \institution{Zhipu AI}
  \country{China}
}
\email{shiyu.huang@zhipuai.cn}
\authornotemark[2]

\author{Deheng Ye}
\affiliation{%
  \institution{Tencent}
  \country{China}
}
\email{dericye@tencent.com}

\author{Wenbo Ding}
\affiliation{%
  \institution{Tsinghua University}
  \country{China}
}
\email{ding.wenbo@sz.tsinghua.edu.cn}

\author{Yaodong Yang}
\affiliation{%
  \institution{Peking University}
  \country{China}
}
\email{yaodong.yang@pku.edu.cn}

\author{Yu Wang}
\affiliation{%
  \institution{Tsinghua University}
  \country{China}
}
\email{yu-wang@mail.tsinghua.edu.cn}
\authornotemark[2]

\thanks{
    {\dag} Corresponding authors.
    
    This research was supported by National Natural Science Foundation of China (No.62406159, 62325405), Postdoctoral Fellowship Program of CPSF under Grant Number (GZC20240830, 2024M761676), China Postdoctoral Science Special Foundation 2024T170496.
}

\renewcommand{\shortauthors}{Zhang et al.}

\begin{abstract}
  Self-play, a learning paradigm where agents iteratively refine their policies by interacting with historical or concurrent versions of themselves or other evolving agents, has shown remarkable success in solving complex non-cooperative multi-agent tasks. Despite its growing prominence in multi-agent reinforcement learning (MARL), such as Go, poker, and video games, a comprehensive and structured understanding of self-play remains lacking. This survey fills this gap by offering a comprehensive roadmap to the diverse landscape of self-play methods. We begin by introducing the necessary preliminaries, including the MARL framework and basic game theory concepts. Then, it provides a unified framework and classifies existing self-play algorithms within this framework. Moreover, the paper bridges the gap between the algorithms and their practical implications by illustrating the role of self-play in different non-cooperative scenarios. Finally, the survey highlights open challenges and future research directions in self-play. 

\end{abstract}

\begin{CCSXML}
<ccs2012>
<concept>
<concept_id>10010147.10010178.10010219.10010220</concept_id>
<concept_desc>Computing methodologies~Multi-agent systems</concept_desc>
<concept_significance>500</concept_significance>
</concept>
<concept>
<concept_id>10002944.10011122.10002945</concept_id>
<concept_desc>General and reference~Surveys and overviews</concept_desc>
<concept_significance>500</concept_significance>
</concept>
</ccs2012>
\end{CCSXML}

\ccsdesc[500]{Computing methodologies~Multi-agent systems}
\ccsdesc[500]{General and reference~Surveys and overviews}

\keywords{Self-play, reinforcement learning, game theory}

\maketitle

\section{Introduction}

\begin{figure}[t]
    \centering
    \includegraphics[width=1\linewidth]{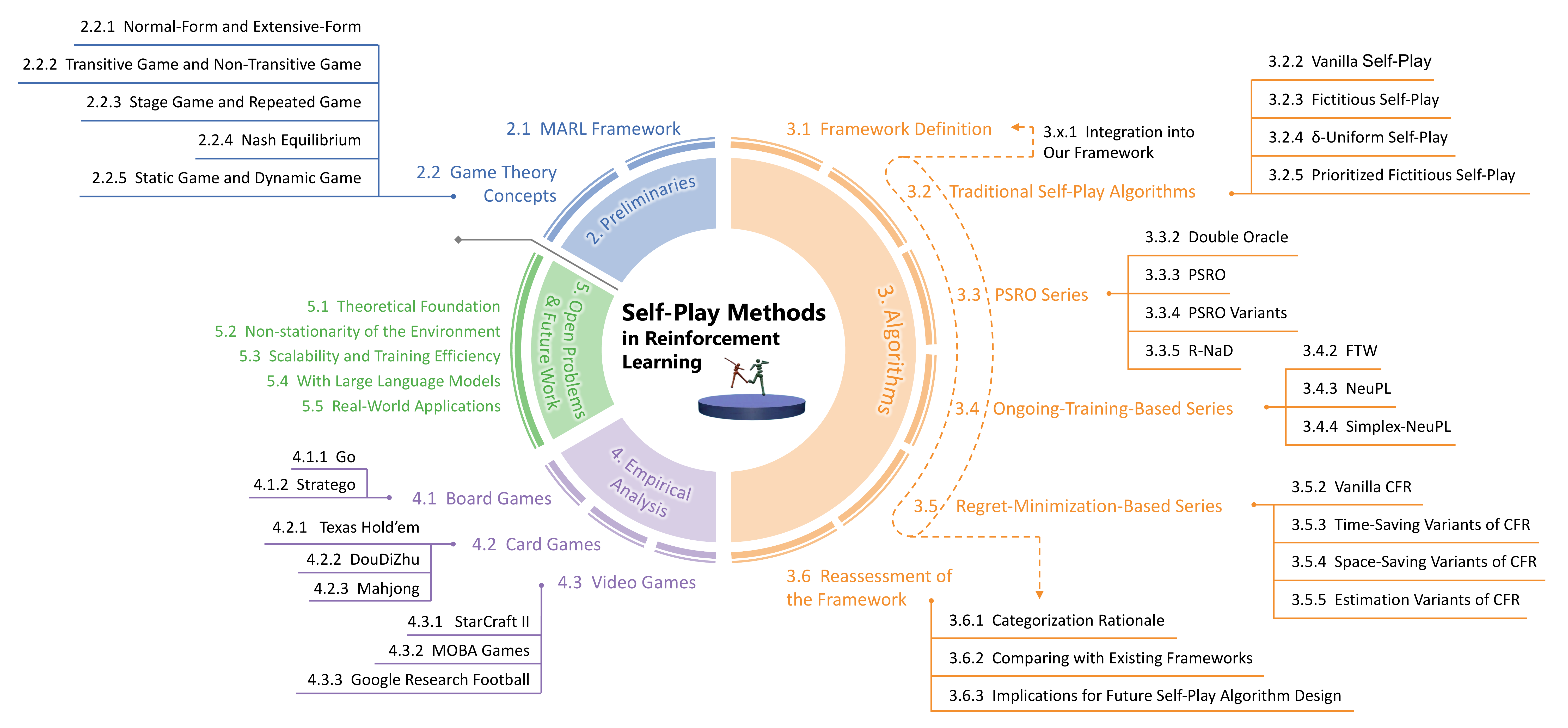}
    \caption{Overview of our survey. }
    \label{fig:chap01-overview}
\end{figure}

Reinforcement learning (RL) represents a significant paradigm~\citep{sutton2018reinforcement} within machine learning (ML), focused on optimizing decision processes through interaction with an environment. 
In RL, the environment is modeled as a Markov decision process (MDP), where agents observe states, take actions, receive rewards, and cause transitions to new states. The primary goal of RL algorithms is to derive the optimal policy that yields the maximum expected accumulated reward over time. 
Deep RL extends traditional RL by employing deep neural networks as function approximators to handle high-dimensional state spaces, contributing to breakthroughs in various complex tasks~\citep{mnih2013playing}.

Transitioning from single-agent to multi-agent reinforcement learning (MARL) introduces complex dynamics~\citep{rashid2018qmix,mahajan2019maven,yu2022surprising}. In MARL, the interdependence of agents' actions presents significant challenges, as the environment appears non-stationary to each agent. 
In the context of MARL, environments can generally be categorized into cooperative and non-cooperative settings. In cooperative scenarios, all agents share a common objective and are incentivized to work together, often leading to formulations that maximize a joint reward. In contrast, non-cooperative settings involve agents with individual or even conflicting goals, where the success of one agent may come at the expense of others.
In MARL, non-cooperative tasks are generally harder than cooperative ones because the absence of a shared reward prevents the effective use of centralized training, a key technique for mitigating non-stationarity in cooperative settings~\citep{foerster2016learning, foerster2018counterfactual, lowe2017multi, yu2022surprising}, thereby further aggravating non-stationarity.

With the help of game theory, a mathematical framework that models the interactions between multiple decision-makers, self-play emerges as an elegant solution to some inherent challenges in MARL.
Vanilla self-play refers to a setting in which an agent continuously trains by competing against the most recent version of itself, as famously demonstrated for the symmetric zero-sum game of checkers~\citep{samuel1959some}. Self-play, however, encompasses far more than this canonical example. 
More broadly, self-play refers to a class of learning methods where at least one agent iteratively improves its policy by interacting with a distribution of evolving opponents, which may include historical or concurrent versions of itself or other agents.
It is worth noting that although some MARL studies have applied the concept of self-play in cooperative tasks such as Hanabi~\citep{cui2021k} and Overcooked~\citep{strouse2021collaborating},
our survey is conducted within non-cooperative settings.
Self‑play has demonstrated remarkable success in non-cooperative domains such as Go~\citep{silver2016alphago,silver2017alphagozero,silver2018alphazero,schrittwieser2020muzero}, chess~\citep{silver2018alphazero, schrittwieser2020muzero}, poker~\citep{moravvcik2017deepstack, heinrich2015fsp}, and complex video games~\citep{berner2019dota,vinyals2019grandmaster}. 
In these scenarios, it has developed strategies that surpass human expertise. 
Although the application of self-play is extensive and promising, it is accompanied by limitations, such as the potential convergence to suboptimal strategies and significant computational requirements~\citep{silver2016alphago, silver2018alphazero}.

Although some research~\citep{wellman2025empirical} adopts a different perspective through empirical game-theoretic analysis (EGTA), relatively few comprehensive surveys focus on self-play. 
Among them, a study~\citep{hernandez2021comparison} develops an algorithmic framework for self-play, but it does not incorporate the Policy-Space Response Oracle (PSRO)~\citep{lanctot2017psro} series of algorithms, while a separate study~\citep{bighashdel2024policy} focuses exclusively on PSRO, without considering other self-play algorithms. Another study~\citep{digiovanni2021survey} examines the theoretical safety of self-play.
While these studies are valuable, they do not provide a comprehensive perspective that fully captures the breadth and depth of self-play. 
To address this gap, this survey aims to offer a systematic roadmap to the diverse landscape of self-play algorithms in MARL within non-cooperative settings.

The survey is organized as follows. 
Sec.~\ref{sec:preliminaries} introduces the background of self-play, including the MARL framework and game theory concepts. 
Sec.~\ref{sec:algorithms} proposes a unified framework and then categorizes existing self-play algorithms into four categories based on this framework. 
In Sec.~\ref{sec:empirical_analysis}, a comprehensive analysis illustrates how self-play is applied in various scenarios. 
Sec.~\ref{sec:discussion} describes open problems in self-play and explores future research directions. Finally, 
Sec.~\ref{sec:conclusion} concludes the survey on self-play.
A more detailed overview of our survey is depicted in Fig.~\ref{fig:chap01-overview}, which further illustrates the relationships among the subsections.

\section{Preliminaries}
\label{sec:preliminaries}

This section first introduces the framework of MARL, where agents interact with an environment to learn policies that maximize cumulative rewards. 
We then present key concepts from non-cooperative game theory, which provides tools to analyze strategic interactions among rational decision-makers with individual objectives.

\subsection{MARL Framework}

In RL, the environment is modeled as an MDP, where the Markovian assumption states that the environment’s evolution is fully determined by its current state, eliminating the need to consider past states.
An agent interacts with the environment by taking actions, which result in different states and associated rewards.
MDPs can be extended to multi-agent settings, known as Markov games (MGs)~\citep{littman1994markov} or stochastic games~\citep{shapley1953stochastic}. We focus on the most general form: partially observable Markov games (POMGs), where multiple agents interact with the environment, and each agent receives only individual observations instead of the full state, thus having limited access to the environment’s state.

A POMG $\mathcal{G}$ is defined by $\mathcal{G}=(\mathcal{N}, \mathcal{S},\mathcal{A},\mathcal{O},\mathcal{P},\mathcal{R},\gamma, \rho )$. $\mathcal{N}=\{1,\cdots,n\}$ denotes n agents. $\mathcal{S}$ is the state space. $\mathcal{A}=\prod_{i=1}^n \mathcal{A}_i$ is the product of the action space of each agent. Similarly, $\mathcal{O}=\prod_{i=1}^n \mathcal{O}_i$ is the product of the observation space of each agent. $\mathcal{P}:\mathcal{S}\times\mathcal{A}\times\mathcal{S}\rightarrow [0,1]$ denotes the transition probability from one state to another given the actions of each agent. $\mathcal{R}=\{\mathcal{R}_1,\cdots,\mathcal{R}_n\}$, where $ \mathcal{R}_i:\mathcal{S}\times\mathcal{A}\rightarrow \mathbb{R}$ denotes the reward function of agent $i$. $\gamma\in[0,1]$ is the discount factor. $\rho: \mathcal{S}\rightarrow [0,1]$ describes initial state distribution. 

More concretely, in RL, agents interact with the environment through the following procedure: At each discrete time step $t$, each agent $i$ receives an observation $o_{i,t}$ from the environment and selects an action $a_{i,t}$ based on a stochastic policy $\pi_{i}:\mathcal{O}_i\times\mathcal{A}_i\rightarrow[0,1]$. After receiving the joint actions $\mathbf{a}_t=(a_{1,t},\cdots,a_{n,t})$, the environment transitions from the current state $s_t$ to a subsequent state $s_{t+1}$ according to the transition function $\mathcal{P}$ and sends a reward $r_{i,t+1}$ to every agent $i$. 
The ultimate goal of agent $i$ is to learn a policy $\pi_i$ that maximize the expected discounted cumulative rewards: $\mathbb{E}_{(\pi_i,\pi_{-i})}[\sum_{t=0}^\infty \gamma^t r_{i,t+1}]$, where $\pi_{-i}$ denotes the other agents’ policies.

\subsection{Game Theory Concepts}

\subsubsection{Normal-Form and Extensive-Form} 

The normal form and extensive form are two distinct representations of games in game theory. 
If a game $\mathcal{G}$ is represented in the \textbf{normal-form}, it can be expressed by $\mathcal{G}=(\mathcal{N},\mathbf{\Pi}, \mathbf{u})$. $\mathcal{N}=\{1,2,\cdots,n\}$ denotes the players. 
The set $\mathbf{\Pi} = \Pi_1 \times \cdots \times \Pi_n$ represents the pure strategy space for all players. 
A \textbf{pure strategy} specifies a deterministic action choice for each point in the game.
A vector $\boldsymbol{\pi} = (\pi_1, \dots, \pi_n) \in \mathbf{\Pi}$ is called a \textbf{strategy profile}. A \textbf{mixed strategy} $\boldsymbol{\sigma}$ assigns a probability distribution over the set of pure strategies. 
For player $i$, a mixed strategy is represented by $\sigma_i \in \Delta(\Pi_i)$, where $\Delta$ denotes the probability simplex.
$\mathbf{u}=(u_1,\cdots,u_n)$, where $u_i:\mathbf{\Pi}\rightarrow \mathbb{R}$, is a \textbf{utility function} that assigns the expected \textbf{payoff} to each player $i$. 
If $\exists C\in\mathbb{R},\forall \boldsymbol{\pi}\in\mathbf{\Pi}, \sum_i u_i(\boldsymbol{\pi})=C$, the game is a \textbf{constant-sum} game. Specially, if $C=0$, it is a \textbf{zero-sum} games, in which every gain by one player is exactly offset by losses to the others. If no such constant $C$ exists, it is a \textbf{general-sum} game. 
Moreover, if $\Pi_1=\cdots=\Pi_n$ and for any permutation $\tau$ and any strategy profile $(\pi_1, \dots, \pi_n)$, $u_i(\pi_1,\cdots,\pi_n) = u_{\tau(i)}(\pi_{\tau(1)},\cdots,\pi_{\tau(n)})$, the game is a \textbf{symmetric game}. 
If a finite set of players is involved and each player has a finite set of strategies, a normal-form game can be depicted in a tensor $T\in\mathbb{R}^{|\Pi_1|\times\cdots\times|\Pi_n|\times n}$, where $|\Pi_i|$ is the size of player $i$'s strategy space and the final dimension indexes each player’s payoff.

Specifically, in \textit{two-player zero-sum symmetric normal-form games}, both players share the same pure strategy space, denoted by $\Pi$, such that $\Pi = \Pi_1 = \Pi_2$. 
Since the utility function satisfies $u_1(\pi_i, \pi_j) = -u_2(\pi_i, \pi_j)$, we can simplify the utility to a single function $u$, where for $\pi_i, \pi_j \in \Pi$, if $\pi_i$ beats $\pi_j$, then $u(\pi_i, \pi_j) = -u(\pi_j, \pi_i) > 0$. 
This game can be directly depicted in an \textbf{evaluation matrix} $M_\Pi\in\mathbb{R}^{|\Pi|\times|\Pi|}$.
It captures the game outcomes by detailing the results of different strategies when they are played against each other: $M_\Pi=\{u(\pi_i, \pi_j):\pi_i, \pi_j\in\Pi\times\Pi\}$.

If a game is represented in the \textbf{extensive-form}, it is expressed sequentially, illustrating the sequence of moves, choices made by the players, and the information available to each player during decision-making. Typically, a game in the extensive-form is represented by a game tree. This tree demonstrates the sequential and potentially conditional nature of decisions. A game $\mathcal{G}$ represented in the extensive-form can be expressed by $\mathcal{G}=(\mathcal{N}\cup \{c\},H,Z,P,\mathcal{I},A,\mathbf{u})$.
$\mathcal{N}=\{1,2,\cdots,n\}$ denotes a set of players. $c$ is the \textbf{chance} node and can be regarded as a special player.
$H$ represents a set of possible \textbf{histories} and $Z\subseteq H$ is a set of \textbf{terminal histories}. 
Order of moves is represented by a function $P(h) \in \mathcal{N} \cup \{c\}$ to indicate which player is to move, where $h \in H$.
$\mathcal{I}$ denotes \textbf{information set partitions} and $\mathcal{I}_i$ denotes the information set partitions for player $i$. If player $i$ reaches a history $h \in I_i$, where $I_i \in \mathcal{I}_i$ is a specific \textbf{information set} for player $i$, all decision nodes in $I_i$ are observationally equivalent to player $i$, and he cannot tell which specific node within $I_i$ has been reached. A game has \textbf{perfect information} precisely when every information set of every player is a singleton. Formally, $ \forall i,\ \forall I_i\in\mathcal{I}_i,\ |I_i|=1$. In such games, whenever a player is to move, he knows exactly which decision node has been reached and, equivalently, the full sequence of past actions and observed states. Otherwise the game is considered to have \textbf{imperfect information}.
Action space is represented by $A(h)$ for a non-terminal history $h \in H$. For all non-terminal histories $h$ within an information set $I_i$, the available actions are the same; otherwise, they are distinguishable. Therefore, we use $A(I_i)$ to represent the available actions for the information set $I_i$.
Utility functions is denoted by \(\mathbf{u}=(u_1, \cdots, u_n)\), where \(u_i: Z \rightarrow \mathbb{R}\). Together, these components define the structure and dynamics of an extensive-form game. 
Moreover, a \textbf{behavior strategy profile} can be expressed by $\boldsymbol{\beta}=(\beta_1,\cdots,\beta_n)$, where $\beta_i$ maps each $I_i\in\mathcal{I}_i$ to a probability distribution over $A(I_i)$. 
A game satisfies \textbf{perfect recall} if each player $i$ remembers the entire sequence of his past actions and all of the information he had at each of his previous decision points.
In any finite extensive-form game with perfect recall, for every mixed strategy $\boldsymbol{\sigma}$, there exists an equivalent behavior strategy $\boldsymbol{\beta}$, and vice versa, such that both strategies induce the same distribution over terminal histories (i.e., the same outcomes in the game tree).
A \textbf{subgame} of an extensive-form game is any subset of the game tree that (i) has exactly one initial decision node, (ii) whenever it contains a node it also contains all of its descendants, and (iii) whenever it contains a node it contains every other node in that node’s information set.

Beyond normal-form and extensive-form games, the analysis of complex games often involves a higher-level abstraction: the \textbf{meta-game}. 
A meta-game provides an abstraction over the original game by modeling interactions between entire policies rather than individual actions. While commonly instantiated as a normal-form game, a meta-game can, in general, take any game-theoretic form (e.g., extensive-form) as long as it captures strategic interactions over a policy population. In this setting, players choose policies instead of individual actions, and \textbf{meta-strategies} correspond to mixed strategies defined over these policies.

\subsubsection{Transitive Game and Non-Transitive Game}

To provide a clearer introduction to transitive and non-transitive games, in this section, we confine our analysis to two-player zero-sum symmetric games.
In a \textbf{transitive game}, the strategies or outcomes follow a transitive relationship. Formally, $\forall \pi_i,\pi_j,\pi_k\in\Pi$, if $u(\pi_i,\pi_j)>0$ and $u(\pi_j,\pi_k)>0$, then it must follow that $u(\pi_i,\pi_k)>0$. This transitive property simplifies the strategic landscape, allowing for an ordinal ranking of strategies. Conversely, in a \textbf{non-transitive game},  $\exists~\pi_i,\pi_j,\pi_k\in\Pi$ such that $u(\pi_i,\pi_j)>0$ and $u(\pi_j,\pi_k)>0$, but $u(\pi_i,\pi_k)\le0$. This introduces a cyclic relationship among strategies, thereby complicating the game. The complexity often results in a mixed-strategy equilibrium, where players randomize their choices among multiple strategies to maximize their expected payoff. 
A classical example of a non-transitive game is Rock-Paper-Scissors, in which no single strategy uniformly dominates all others.

\subsubsection{Stage Game and Repeated Game} 

A \textbf{stage game} (or \textbf{one-shot game}) is a game that is played only once, namely a one-shot interaction between players. A famous example of a stage game is the Prisoner's Dilemma. A \textbf{repeated game} is derived from a stage game played multiple times. Formally, a repeated game based on a stage game $\mathcal{G}$ is defined by playing $\mathcal{G}$ for $T$ periods, where $T$ can be finite or infinite. The strategies in a repeated game are history-contingent, meaning they can depend on the entire sequence of past plays. An example of a repeated game is Texas Hold’em, where players engage in multiple rounds, and each player’s strategy may evolve based on previous rounds, betting patterns, and the history of interactions among players.

\subsubsection{Nash Equilibrium}

For simplicity, $\pi_i$ denotes the strategy of player $i$, and $\pi_{-i}$ denotes the strategies of all players other than player $i$. Given $\pi_{-i}$, player $i$'s \textbf{best response (BR)} is the strategy that maximizes player $i$'s payoff: 
\begin{equation}
BR_i(\pi_{-i}) = \arg\max_{\pi_i}u_i(\pi_i,\pi_{-i}).
\end{equation}

A strategy $\pi_i^*$ is an \textbf{$\boldsymbol{\epsilon}$-BR} to strategies $\pi_{-i}$ if: 
\begin{equation}
    u_i(\pi_i^*, \pi_{-i}) \geq u_i(BR_i(\pi_{-i}), \pi_{-i}) - \epsilon,
\end{equation}
where $\epsilon$ is a pre-specified threshold.

A strategy profile $(\pi_1^*, \pi_2^*, ..., \pi_n^*)$ is a \textbf{Nash equilibrium (NE)} if, for each player $i$:
\begin{equation}
    u_i(\pi_i^*, \pi_{-i}^*) \geq u_i(\pi_i, \pi_{-i}^*), \forall \pi_i,
\end{equation}
meaning that no player can benefit by changing their strategy unilaterally, given the strategies of others. In other words, an NE is a situation where each player’s strategy is a BR to the others’ strategies.

A strategy profile $(\pi_1^*, \pi_2^*, ..., \pi_n^*)$ is an \textbf{$\boldsymbol{\epsilon}$-NE} if, for each player $i$:
\begin{equation}
    u_i(\pi_i^*, \pi_{-i}^*) \geq u_i(\pi_i, \pi_{-i}^*) - \epsilon, \forall \pi_i,
\end{equation}
meaning that no player can increase their payoff by more than $\epsilon$ by unilaterally changing their strategy.

However, computing NE is generally intractable in complex games, so some researchers utilize $\alpha$-Rank~\citep{omidshafiei2019alpharank} and Correlated Equilibrium (CE)~\citep{aumann1974subjectivity} as alternatives. Some studies also resort to Replicator Dynamics~\citep{taylor1978evolutionary} to analyze strategy evolution.

\subsubsection{Static Game and Dynamic Game}

The market entry game is a classic example in game theory. There are two players: Firm 1 (the entrant), which is considering whether to enter a market, and Firm 2 (the incumbent), which currently dominates that market.
The game outcomes are as follows:
\begin{itemize}
    \item If the entrant stays out of the market, the entrant earns a payoff of 4, and the incumbent earns 10.
    \item If the entrant enters and the incumbent chooses to attack, both firms receive a payoff of 3.
    \item If the entrant enters and the incumbent chooses not to attack, both firms receive a payoff of 6.
\end{itemize}

\begin{figure}[htbp]
\centering
  \subfloat[]{\includegraphics[width=0.3\linewidth]{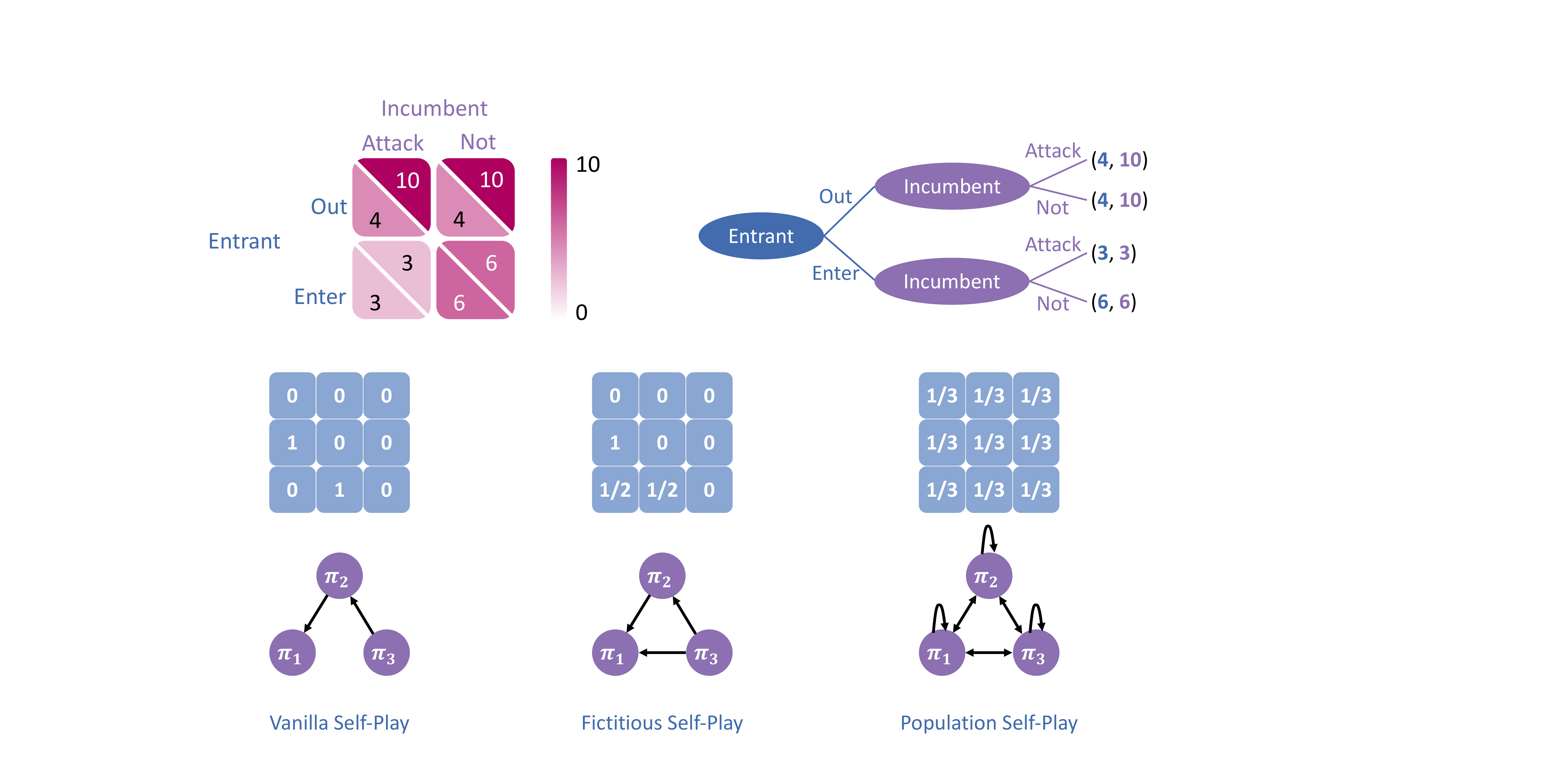}\label{fig:matrix_normal}}
  \hspace{0.1\linewidth}
  \subfloat[]{\includegraphics[width=0.4\linewidth]{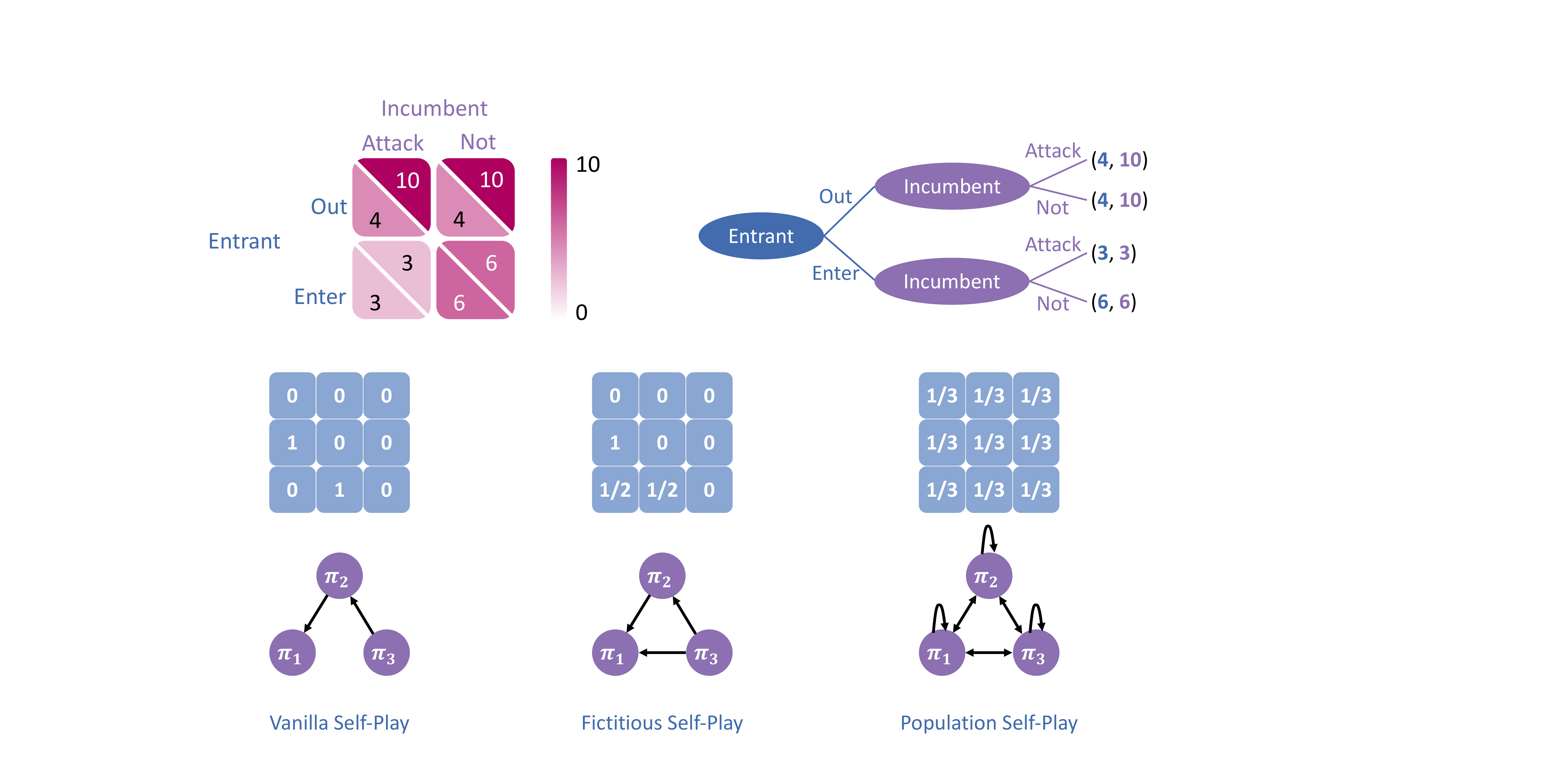}\label{fig:tree_sequential}}
  
  \caption{The example of the market entry game.
  (a) Matrix representation of \textit{simultaneous} market entry game in normal-form.
  (b) Game tree representation of \textit{sequential} market entry game in extensive-form.}
  \label{fig:game-theory-example}
\end{figure}

In the \textit{simultaneous} version of the market entry game, both players choose their strategies at the same time. The entrant decides whether to enter, and the incumbent simultaneously decides whether to attack, without knowing the entrant’s decision. Games of this kind are referred to as \textbf{static games}.
Although a static game can be represented in either normal- or extensive-form, it is conventionally shown in the normal form because it succinctly captures the strategic structure of simultaneous moves (as shown in Fig.~\ref{fig:matrix_normal}).
In the \textit{sequential} version, the entrant moves first, and the incumbent responds after observing the entrant’s decision. Games of this nature are called \textbf{dynamic games}.
Although any dynamic game admits a normal-form representation, that representation scales exponentially with the number of information sets. Consequently, dynamic games are typically illustrated in extensive form, whose game tree compactly encodes the order of moves and the information available to each player (as depicted in Fig.~\ref{fig:tree_sequential}).

The market entry game demonstrates how the structure of a game influences equilibrium outcomes. In the static version, non-credible threats—such as the incumbent’s threat to attack even when it is not in its best interest—may support a Nash equilibrium where the entrant stays out. In contrast, the dynamic version allows for refinement through \textbf{subgame perfect Nash equilibrium (SPNE)}, which requires that players’ strategies constitute a Nash equilibrium in every subgame. Under this refinement, the threat to attack is ruled out as non-credible, and the predicted outcome is that the entrant will enter and the incumbent will accommodate, since attacking would harm the incumbent itself.

\section{Algorithms}
\label{sec:algorithms}

Based on existing self-play work~\citep{hernandez2021comparison, lanctot2017psro, liu2022neupl, garnelo2021pick}, we propose a self-play framework (Algo.~\ref{alg:framework}) that boasts enhanced expressivity and superior generalization capabilities. 
For simplicity, our framework is illustrated for \textit{symmetric games}. 
All players share a policy population with a fixed maximum size. In each iteration, a newly initialized policy is trained, while opponent policies are sampled from the population. 
After training, the new policy is added to the population, possibly replacing an existing policy. 
The updated policy population is then evaluated, and then the opponent sampling strategy is recalculated for the next iteration. 

This section is organized as follows:
In Sec.\ref{sec:framework_def}, we provide a formalized description of our framework.
We then categorize self-play algorithms into four primary groups: traditional self-play algorithms (Sec.~\ref{sec:traditionalsp}), the PSRO series (Sec.~\ref{sec:PSRO}), the ongoing-training-based series (Sec.~\ref{sec:ongoing}), and the regret-minimization-based series (Sec.~\ref{sec:regret}). 
We analyze how these four categories align with our framework and introduce corresponding algorithms for each category. 
To make it more straightforward, we highlight the representative classic algorithms from these and present them in Table~\ref{table:algs} for comparison within our framework.
Sec.~\ref{sec:reassess} compares the four categories, explains our classification rationale, contrasts the framework with prior ones to demonstrate its greater generality, and highlights the design insights it offers for future self-play research.

\subsection{Framework Definition}
\label{sec:framework_def}

\begin{figure*}[t]
    \centering
    \includegraphics[width=1\linewidth]{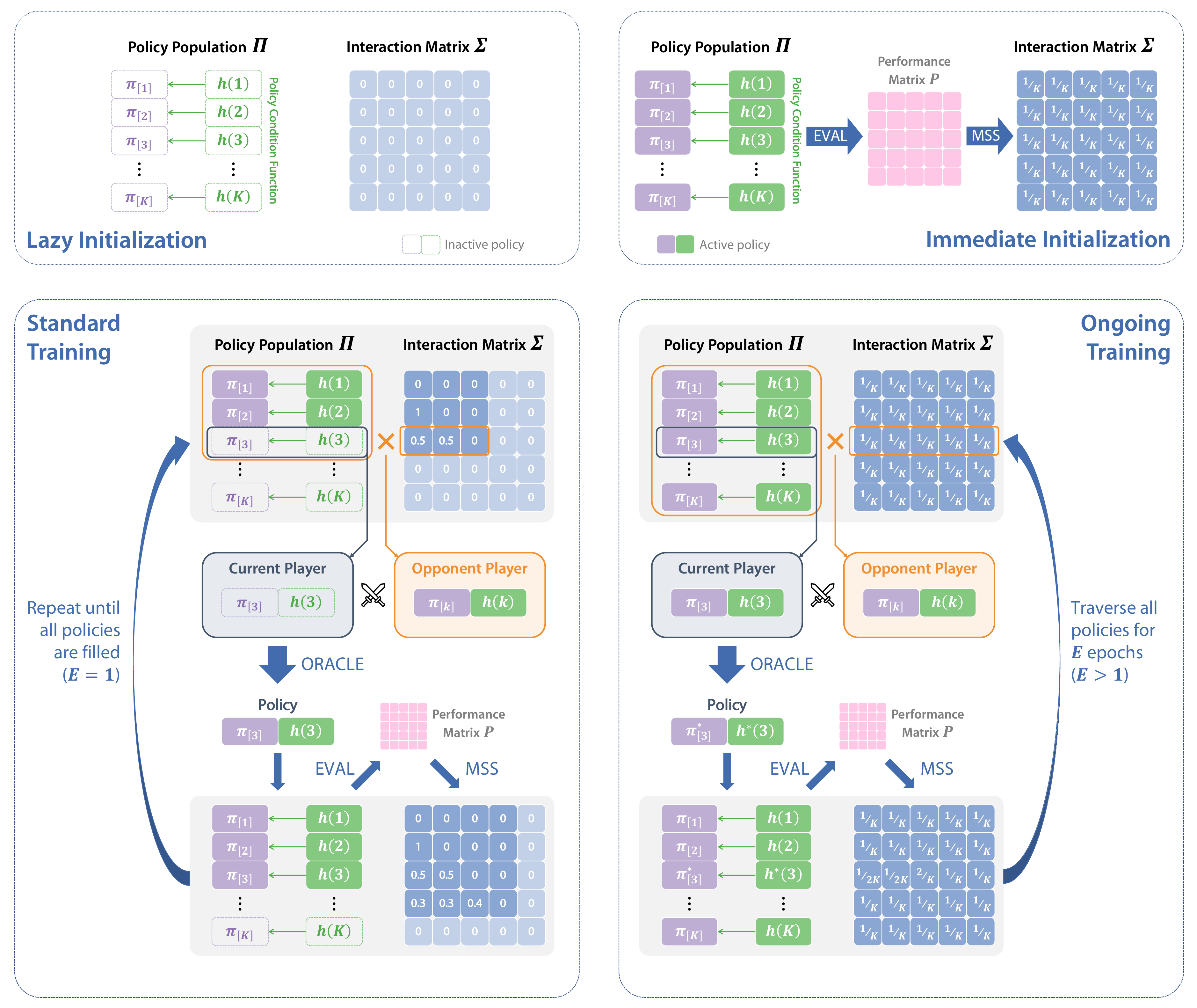}
    \caption{
    Illustration of our framework.
    The top row shows two different initialization methods for the policy population \(\Pi\) and the interaction matrix \(\Sigma\): lazy initialization and immediate initialization (corresponding to Line~\ref{framework_line:initialize_1} in Algo.~\ref{alg:framework}).
    The bottom row illustrates two training paradigms: standard training and ongoing training (corresponding to Lines~\ref{framework_line:epoch}\textasciitilde\ref{framework_line:end_for} in Algo.~\ref{alg:framework}).
    We categorize self-play algorithms into four types: traditional self-play, the PSRO series, the ongoing-training-based series, and the regret-minimization-based series. 
    Among these, traditional self-play, the PSRO series, and the regret-minimization-based series utilize lazy initialization and standard training, whereas the ongoing-training-based series employs immediate initialization and ongoing training.
    For a detailed description and analysis, please refer to Sec.\ref{sec:algorithms} and Table\ref{table:algs}.
    }
    \label{fig:chap03-framework}
    \vspace{-10pt}
\end{figure*}

\begin{algorithm}
    \caption{A unified framework of self-play.}
    \label{alg:framework}
    \begin{algorithmic}[1]
        
        \STATE Initialize $\Pi, \Sigma$
        \label{framework_line:initialize_1}
        \FOR{$e \in [[E]]$}
        \label{framework_line:epoch}
        \vspace{1pt}
            \vspace{1pt}
            \FOR{$\sigma_{[k]}\in[[K]]$}
            \vspace{1pt}
                \STATE Initialize $\pi_{{[k]}}^h$ \label{framework_line:initialize_2}
                \vspace{1pt}
                \STATE $\pi_{[k]}^h\leftarrow\textit{ORACLE}(\pi_{[k]}^h,\sigma_{[k]},\Pi)$\hfill $\triangleright$ Compute the oracle.\label{framework_line:oracle}
                \vspace{1pt}
                
                \STATE $\mathcal{P}\leftarrow\textit{EVAL}(\Pi)$
                \vspace{1pt}\hfill $\triangleright$ Get policies' performance (optional).\label{framework_line:eval}
                \STATE $\Sigma\leftarrow\textit{MSS}(\mathcal{P})$\hfill $\triangleright$ Update the interaction matrix (optional).\label{framework_line:MSS}
                \vspace{1pt}
            \ENDFOR
            \vspace{1pt}
        \ENDFOR
        \label{framework_line:end_for}
        \vspace{1pt}
        \RETURN $\Pi, \Sigma$
        \vspace{1pt}
    \end{algorithmic}
\end{algorithm}

In Algo.~\ref{alg:framework}, we define a unified self-play framework based on~\citet{liu2022neupl, lanctot2017psro, garnelo2021pick, hernandez2021comparison}. We combine visual illustrations (Fig.~\ref{fig:chap03-framework}) with detailed textual descriptions to explain the key processes of our framework. Next, we provide an in-depth explanation of these processes:

\begin{itemize}
    \item $\Pi:=\{\pi_{[k]}(\cdot|h(k)\}_{k=1}^K$: Each policy $\pi_{[k]}$ in the \textbf{policy population} $\Pi$ is conditioned on a \textbf{policy condition function} $h(k)$, which provides supplementary information for certain algorithms.
    We denote the policy $\pi_{[k]}(\cdot|h(k))$ as $\pi_{[k]}^h$. Note that $\pi_{[k]}$ refers to the $k$th policy in the population, and $K$ denotes the \textbf{policy population size}.
    $\Pi$ can be initialized (Line~\ref{framework_line:initialize_1} in Algo.~\ref{alg:framework}) in two ways: lazy initialization and immediate initialization.
    In \textbf{lazy initialization}, $\Pi$ starts with $K$ placeholder policies, with the actual policies being initialized during the training iterations (Line~\ref{framework_line:initialize_2} in Algo.~\ref{alg:framework}).
    In \textbf{immediate initialization}, $\Pi$ is initialized upfront with $K$ real policies, which may be randomly generated or pre-trained models. 
    A policy $\pi_{[k]}^h$ is considered \textbf{inactive} if it is a placeholder; otherwise, it is considered \textbf{active}. 
    
    \item $\Sigma:=\{\sigma_{[k]}\}_{k=1}^{K+1}\in\mathbb{R}^{K\times C_1}$: The \textbf{interaction matrix} $\Sigma$ consists of rows $\sigma_{[k]}$. Each $\sigma_{[k]} \in \mathbb{R}^{C_1}$ represents the probability distribution over opponent policies in the policy population $\Pi$, where $C_1 = K^{n-1}$ and $n$ denotes the number of players.
    Specially, in a two-player game, if $C_1=K$ and $\Sigma_{mn}$ represents the probability that policy $m$ is optimized against policy $n$, $\Sigma$ can be depicted in \textbf{directed interaction graphs} (Fig.~\ref{fig:chap03-sigma}). 
    $\Sigma$ will be initialized with all zeros (Line~\ref{framework_line:initialize_1} in Algo.~\ref{alg:framework}), providing a starting point before the meta-strategy solver updates it using the first payoff information.

    \item For epoch $e\in [[E]]$ (Line~\ref{framework_line:epoch} in Algo.~\ref{alg:framework}): $E$ denotes the total number of epochs for the entire policy population. 
    For example, if the algorithm only introduces new policies into the population without updating existing ones, then $E=1$. This implies that only the \textit{inactive} policy is likely to be chosen for training in each iteration and turned into an \textit{active} one, while \textit{active} policies remain unchanged. Conversely, if \textit{active} policies are updated multiple times throughout the algorithm, then $E>1$. Indeed, $E$ reflects the number of updates performed. 

    \item Initialize $\pi_{[k]}^h$ (Line~\ref{framework_line:initialize_2} in Algo.~\ref{alg:framework}): The initialization of $\pi_{[k]}^h$ can vary depending on the algorithm being used. 
    It can be initialized randomly or by leveraging pre-trained models~\citep{silver2016alphago, vinyals2019grandmaster} or through a recently updated policy. 
    
    \item $\text{ORACLE}(\pi_{[k]},\sigma_{[k]},\Pi)$ (Line~\ref{framework_line:oracle} in Algo.~\ref{alg:framework}): The ORACLE is an abstract computational entity that returns a new policy adhering to specific criteria. Here, we divide ORACLE into three types.
    (1) One type is the BR oracle, which is designed to identify the optimal counter-strategies against an opponent's strategy, including finding NE~\citep{mcmahan2003doubleoracle}. However, it often requires considerable computational effort.
    (2) To alleviate the computational demands, the approximate best response (ABR) oracle is introduced, using techniques such as RL (Algo.~\ref{alg:ABR_rl}), evolution-theory-based methods (Algo.~\ref{alg:ABR_evo}) or regret minimization methods (Algo.~\ref{alg:ABR_rm}). 
    (3) Other specially crafted ORACLEs are either tailored to specific meta-strategy solvers (MSSs) \citep{muller2019alphapsro,marris2021joint} or introduced to enhance diversity~\citep{balduzzi2019open,perez2021modelling,liu2021towards}.
    Some details of Algo.~\ref{alg:ABR_rl},~\ref{alg:ABR_evo} and~\ref{alg:ABR_rm} need to be mentioned.
    Algo.~\ref{alg:ABR_rl} is a RL method that can be widely used across different categories. Moreover, on-policy RL algorithms do not require a sample buffer to collect trajectories. In contrast, only off-policy methods rely on a sample buffer to store and reuse past experiences (Line~\ref{rl_oracle_line:buffer_1}~and~\ref{rl_oracle_line:buffer_2} in Algo.~\ref{alg:ABR_rl}).
    Algo.~\ref{alg:ABR_evo} is specifically utilized by Regularized Nash Dynamics (R-NaD)~\citep{perolat2021poincare} and Algo.~\ref{alg:ABR_rm} is specifically utilized by the regret-minimization-based series. Details are provided in Sec.~\ref{sec:rnad} and Sec.~\ref{sec:regret}, respectively. 
    
    \item $\text{EVAL}(\Pi)$ (Line~\ref{framework_line:eval} in Algo.~\ref{alg:framework}): Evaluating the policy population $\Pi$. There are multiple evaluation metrics available to assess the performance of each policy. The performance matrix is represented as $\mathcal{P}:=\{p_k\}_{k=1}^K\in\mathbb{R}^{K\times C_2}$. $p_k$ is the performance of policy k, and the dimension $C_2$ depends on the evaluation metric used.
    For instance, $p_k$ can be depicted as the relative skill like Elo ratings ($C_2=1$) or can be depicted as the payoff tensor ($C_2=K^{n-1}$, where n is the number of players). 
    Specially, in two-player symmetric zero-sum games, the expected payoffs can serve as the evaluation metric. In such cases, $\mathcal{P}$ is a square matrix ($C_2=K$). 

    \item $\textit{MSS}:\mathbb{R}^{K\times C_2}\rightarrow\mathbb{R}^{K\times C_1}$ (Line~\ref{framework_line:MSS} in Algo.~\ref{alg:framework}): A \textbf{meta-strategy solver (MSS)} takes the \textbf{performance matrix} $\mathcal{P}$ as its input and produces a new interaction matrix $\Sigma$ as its output. 
    If the MSS produces a constant matrix irrespective of the input, the evaluation step (Line~\ref{framework_line:eval} in Algo.~\ref{alg:framework}) can be skipped to reduce computations. In this case, the MSS step also can be skipped. 
    Additionally, if it is the final iteration, the MSS step should also be skipped, as the resulting meta-strategy will not be used.
    Moreover, if $\Pi$ uses immediate initialization, $\Sigma$ will be initialized (Line~\ref{framework_line:initialize_1} in Algo.~\ref{alg:framework}) by first evaluating the performance and then applying the MSS.

\end{itemize}

\begin{figure}[t]
    \centering
    \includegraphics[width=0.8\linewidth]{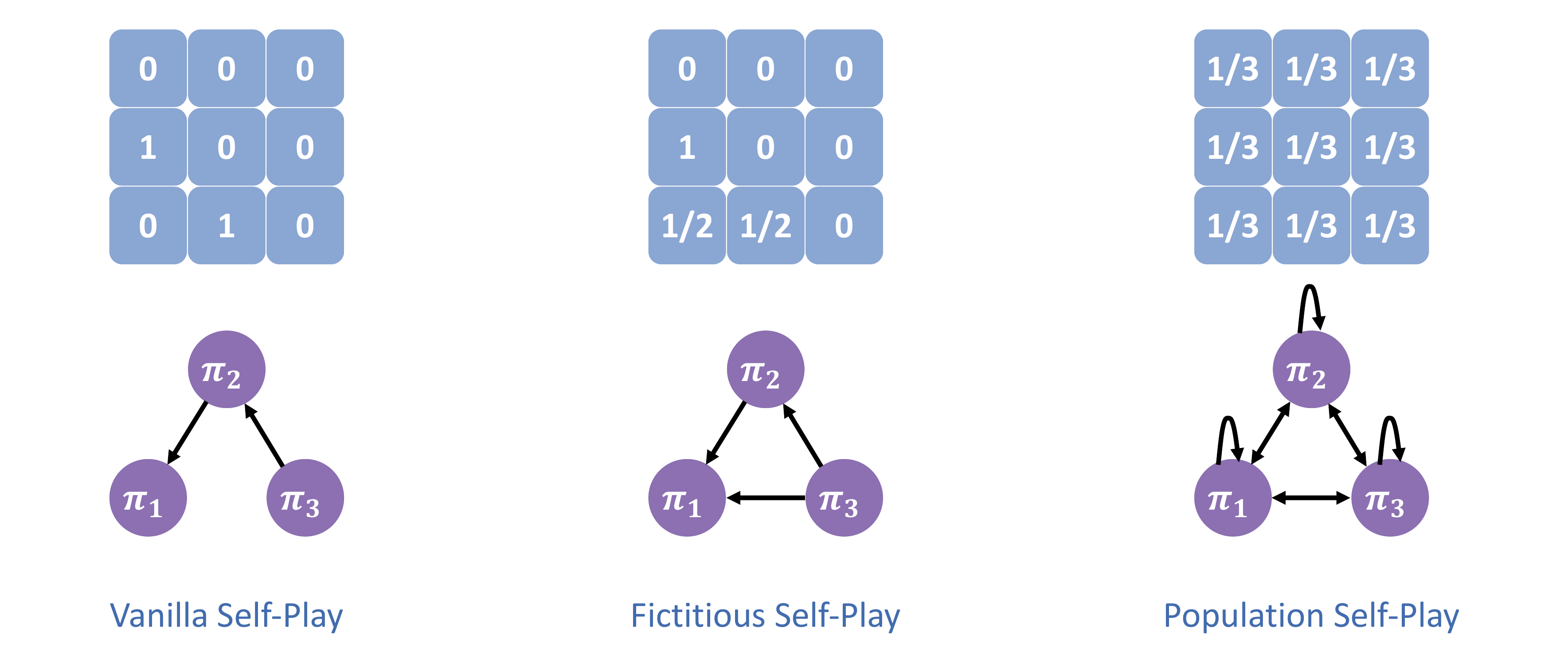}
    \caption{
    Interaction matrix $\Sigma$ examples. When $C_1=K$ and the opponent sampling strategy $\Sigma_{mn}$ represents the probability that policy $m$ is optimized against policy $n$, the interaction matrix $\Sigma$ can be depicted in directed interaction graphs. Here, we consider three representative self-play algorithms. In the \textbf{Top} section, we define the interaction matrix $\Sigma\in \mathbb{R}^{3\times 3}$ as $\{\sigma_{[k]}\}_{k=1}^3$. In the \textbf{Bottom} section, we present directed interaction graphs where the outgoing edges from each node are equally weighted, and their weights collectively sum to one. The relationship between the \textbf{Top} and \textbf{Bottom} sections is established through directed edges: an edge directed from node $m$ to node $n$ with a weight of $\Sigma_{mn}$ signifies that policy $m$ is optimized against policy $n$ with a probability of $\Sigma_{mn}$. Note that this figure is reproduced from~\citet{liu2022neupl} and this concept is initially proposed by~\citet{garnelo2021pick}.
    }
    \label{fig:chap03-sigma}
\end{figure}

\begin{algorithm}

    \caption{Compute the oracle in RL.}
    \label{alg:ABR_rl}
    \begin{algorithmic}[1]
        
        \REQUIRE $\pi_{[k]}^h$\hfill $\triangleright$ Policy $k$ is being trained.
        \vspace{1pt}
        \REQUIRE $\sigma_{[k]}$\hfill $\triangleright$ Opponent sampling strategy of policy $k$.
        \vspace{1pt}
        \REQUIRE $\Pi$\hfill $\triangleright$ Policy population.
        \vspace{1pt}
        \REQUIRE $B$ \hfill $\triangleright$ Sample buffer (optional).

        \WHILE{$\pi_{[k]}^h$ is not valid}\label{alg:ABRvalidation}
        \vspace{1pt}
            \FOR{trajectory $\tau \in [[\tau_{max}]]$}
            \vspace{1pt}
                \STATE sample $\boldsymbol{\pi_{opp}^h}\sim P(\sigma_{[k]})$\hfill $\triangleright$ Policies of opponents.
                \vspace{1pt} 
                \STATE $\boldsymbol{\pi}=(\pi_{[k]}^h,\boldsymbol{\pi_{opp}^h})$
                \vspace{1pt}
                \STATE $s_0,\boldsymbol{o_0}\sim\rho$ \hfill $\triangleright$ Initial state.
                \vspace{1pt}
                \FOR{$t\in [[t_{max}]]$}
                \vspace{1pt}
                    \STATE $\boldsymbol{a_t}\sim\boldsymbol{\pi}(\boldsymbol{o_t})$
                    \vspace{1pt}
                    \STATE $s_{t+1},\boldsymbol{o_{t+1}}\sim P(s_t,\boldsymbol{a_t})$
                    \vspace{1pt}
                    \STATE $\boldsymbol{r_t}\leftarrow\boldsymbol{R}(s_t,\boldsymbol{a_t})$
                    \vspace{1pt}
                \ENDFOR
                \vspace{1pt}
                \STATE $B \leftarrow B \cup \{\tau\}$ \hfill $\triangleright$ Store the trajectory in the buffer (optional).\label{rl_oracle_line:buffer_1}
                \vspace{1pt}
                \STATE $\pi_{[k]}^h\leftarrow\text{update}(\pi_{[k]}^h)$\hfill $\triangleright$ Use RL algorithms to train the policy $k$.
                \label{alg:ABRrlupdate}
                \vspace{1pt}
            \ENDFOR
            \vspace{1pt}
        \ENDWHILE
        \vspace{1pt}
        \RETURN $\pi_{[k]}^h, B$ \hfill $\triangleright$ Return the buffer (optional).\label{rl_oracle_line:buffer_2}
        \vspace{1pt}
    \end{algorithmic}

\end{algorithm}

\begin{algorithm}

    \caption{Compute the oracle in evolution theory.}
    \label{alg:ABR_evo}
    \begin{algorithmic}[1]
        
        \REQUIRE $\pi_{[k]}^h$\hfill $\triangleright$ Policy $k$ is being trained.
        \vspace{1pt}
        \REQUIRE $\sigma_{[k]}$\hfill $\triangleright$ Opponent sampling strategy of policy $k$.
        \vspace{1pt}
        \REQUIRE $\Pi$\hfill $\triangleright$ Policy population.
        \vspace{1pt}
        \STATE sample $\boldsymbol{\pi_{op p}^h}\sim P(\sigma_{[k]})$\hfill $\triangleright$ Policies of opponents.
        \vspace{1pt} 
        \STATE $\boldsymbol{\pi_{\text{reg}}}=(\pi_{[k]}^h,\boldsymbol{\pi_{opp}^h})$\hfill $\triangleright$ Regularization policies.
        \vspace{1pt}
        \STATE Transform the reward by adding a policy-dependent reward according to $\boldsymbol{\pi_{\text{reg}}}$~\citep{perolat2021poincare}.
        \vspace{1pt}
        \STATE $\pi_{[k]}^h\leftarrow$ Use replicator dynamics to play the reward-transformed game until convergence.
        \vspace{1pt}
        \vspace{1pt}
        \RETURN $\pi_{[k]}^h$
        \vspace{1pt}
    \end{algorithmic}

\end{algorithm}

\begin{algorithm}

    \caption{Compute the oracle in regret matching.}
    \label{alg:ABR_rm}
    \begin{algorithmic}[1]
        
        \REQUIRE $\pi_{[k]}^h$\hfill $\triangleright$ Policy $k$ is being trained.
        \vspace{1pt}
        \REQUIRE $\sigma_{[k]}$\hfill $\triangleright$ Opponent sampling strategy of policy $k$.
        \vspace{1pt}
        \REQUIRE $\Pi$\hfill $\triangleright$ Policy population.
        \vspace{1pt}
        \FOR{each player i}
        \vspace{1pt}
            \STATE sample $\boldsymbol{\pi_{opp}^h}\sim P(\sigma_{[k]})$\hfill $\triangleright$ Policies of opponents.
            \vspace{1pt} 
            \STATE $\boldsymbol{\pi}=(\pi_{[k]}^h(i),\boldsymbol{\pi_{opp}^h}(-i))$
            \vspace{1pt}
            \STATE Use regret matching to play the game and update $h(k)$ with new regret minimization information.\label{line:collect_regret}
            \vspace{1pt}
        \ENDFOR
        \vspace{1pt}
        \RETURN $\pi_{[k]}^h$
        \vspace{1pt}
    \end{algorithmic}

\end{algorithm}

\subsection{Traditional Self-Play Algorithms}
\label{sec:traditionalsp}

Traditional self-play algorithms involve agents improving their strategies by repeatedly playing against themselves, allowing them to explore various strategies and enhance their decision-making abilities without external input. The simplest form involves agents training against their most recent version to identify weaknesses. Other approaches involve training against a set of strategies from different iterations, enabling agents to develop robust and adaptive strategies. This section will explain how traditional self-play algorithms fit into our framework and introduce representative traditional self-play methods, ranging from simpler forms to more complex ones.

\subsubsection{Integration into Our Framework}

Traditional self-play algorithms can be incorporated into our proposed framework (Algo.~\ref{alg:framework}) with the following settings. \textbf{First}, the policy population $\Pi$ utilizes lazy initialization because the policy population in traditional self-play algorithms is intended to grow with each iteration. \textbf{Second}, we set $E = 1$ because only the \textit{inactive} policy can be trained in each iteration, then turning into an \textit{active} policy. Here, the policy population size $K$ serves as the upper limit for the number of \textit{active} policies in the population. In other words, we use $K$ iterations to optimize the policy. 
\textbf{Third}, the strategy being trained $\pi_{[k]}^h$ can be initialized in a general manner. For instance, the strategy can be initialized randomly, learning from scratch. More often, $\pi_{[k]}^h$ is initialized by $\pi_{[k-1]}(\cdot|h(k-1))$, allowing incremental learning and adaptation based on the most current trained policy to accelerate convergence. 
\textbf{Fourth}, the policies in traditional self-play algorithms don't need supplementary information, we set the policy condition function $h(k)=\emptyset$. 
\textbf{Fifth}, the MSSs of traditional self-play algorithms are straightforward, often yielding a constant interaction matrix $\Sigma$ that eliminates performance evaluation. Only the MSS for prioritized fictitious self-play (described in Sec.~\ref{sec:PFSP}) requires the performance matrix; nonetheless, it does not need to solve for complex game outcomes like NE.

Next, we outline the traditional self-play schemes. For simplicity, we operate under the following assumption:

\begin{assumption}
    In a two-player symmetric game, $C_1=K$, with $\Sigma_{mn}$ denoting the probability that policy $m$ is optimized in response to policy $n$ which leads to $\sum_{n=1}^K\Sigma_{mn}=1,~\forall m$.
    \label{assum:sigma}
\end{assumption}

Based on Assumption~\ref{assum:sigma}, we can further deduce the following important corollary:

\begin{corollary}
    In traditional self-play algorithms, the interaction matrix $\Sigma$ is a lower triangular matrix.
    \label{cor:sigma_traditionalsp}
\end{corollary}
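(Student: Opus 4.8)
The plan is to show that the interaction matrix $\Sigma$ of traditional self-play is lower triangular by examining which opponent policies are available to be sampled at the moment policy $i$ is being trained. A matrix is lower triangular precisely when all entries strictly above the diagonal vanish, i.e. $\sigma_{ij}=0$ whenever $j>i$. So the whole statement reduces to the single claim: \emph{when policy $i$ is being optimized, it can only sample opponents with index $j\le i$.}

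First I would invoke the integration settings established just before the corollary. Because traditional self-play uses placeholder initialization with $E=1$, the policies are turned from ineffective into effective ones \emph{one at a time}, in increasing order of their index: policy $1$ is trained first, then policy $2$, and so on up to policy $N$. This gives a temporal ordering that matches the index ordering. Next I would argue that the opponent sampling strategy $\sigma_i$ can only place positive probability on \emph{effective} policies, since sampling a placeholder (ineffective) policy as an opponent is meaningless — there is no trained strategy there to play against. Combining these two observations: at the iteration in which policy $i$ is being trained, the only effective policies in the population are those with index $1,\dots,i$ (policy $i$ itself being initialized, and all higher-indexed policies $i+1,\dots,N$ still placeholders). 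Hence the support of $\sigma_i$ is contained in $\{1,\dots,i\}$, which forces $\sigma_{ij}=0$ for all $j>i$.

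Finally, I would tie this back to Assumption~\ref{assum:sigma}: under that assumption $\Sigma\in\mathbb{R}^{N\times N}$ with rows normalized so that $\sum_{j=1}^N \sigma_{ij}=1$, and since all above-diagonal entries vanish, the surviving mass on $\{1,\dots,i\}$ still sums to one, so each row is a well-defined sampling distribution over the already-available opponents. Because every entry with $j>i$ is zero, $\Sigma$ is by definition lower triangular, completing the argument.

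The main obstacle is not computational but definitional: the claim hinges entirely on justifying that opponents must be effective and that effectiveness is acquired strictly in index order. I expect the subtle point to be ruling out self-play variants that might sample the currently-training policy itself or re-use a future slot; here I would lean on the stated framework conventions ($E=1$, placeholder initialization, and $\pi_i^h$ typically initialized from $\pi_{i-1}^h$) to confirm that the natural indexing is the training order and that diagonal entries $\sigma_{ii}$ (self-play against one's own current or immediately preceding copy) are permitted, which is consistent with a \emph{lower triangular} — rather than strictly lower triangular — structure.
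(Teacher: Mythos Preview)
Your proposal is correct and follows essentially the same argument as the paper: policies become effective in increasing index order under placeholder initialization with $E=1$, so when policy $i$ is trained only policies $j\le i$ are meaningful opponents, forcing $\sigma_{ij}=0$ for $j>i$. Your version is more carefully stated (in particular your remark about the diagonal entries being permitted, which accommodates the Independent RL case), but the idea is identical to the paper's brief proof.
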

\begin{proof}
    The policy population gradually increases over time. When policy $m$ is selected for training, only policy $n$ $(n \leq m)$ has already been trained and holds meaningful outcomes. Other policies are \textit{inactive} policies. As a result, we exclusively select policy $n$, where $n \leq m$, to serve as the opponent for policy $m$.
\end{proof}

\subsubsection{Vanilla Self-Play}

In vanilla self-play~\citep{samuel1959some}, agents are trained by competing against their latest versions. Thus, the MSS of vanilla self-play:

\begin{equation}
    \textit{MSS}(\mathcal{P})_{mn}=\left\{
        \begin{array}{l}
        1,~\text{if}~n=m-1\\
        0,~\text{otherwise}
        \end{array}
        \right..
    \label{eq:naivesp}
\end{equation}

Regardless of $\mathcal{P}$, the MSS produces the same interaction matrix. Although this MSS is straightforward, it is utilized in many other algorithms, so we refer to it as \textbf{vanilla MSS}. 
Vanilla self-play is effective in transitive games, but it can lead the agent to cyclic learning patterns in non-transitive games. It may also lead to a nonstationary problem, causing the system to get stuck in a local optimum.

A more modern version of vanilla self-play is to combine it with off-policy RL. In each iteration, the policy to be trained, $\pi_{[k]}^h$, is initialized using the previous policy $\pi_{[k-1]}(\cdot | h(k-1))$. Typically, each player’s policy is fixed, and samples are collected into a buffer by vanilla self-play. These samples remain useful for training, even if they were generated by an earlier policy. This allows the agent to leverage past experiences and maintain more stationary training. Once the policy is updated, it will be added to the policy population. 

Moreover, vanilla self-play is frequently used in conjunction with Monte Carlo Tree Search (MCTS), and we provide a detailed discussion in Sec.~\ref{sec:app/board/go}, using the game of Go as a representative application.

\subsubsection{Fictitious Play}

Fictitious Play (FP)~\citep{brown1951fp} is a learning algorithm in game theory where each player best responds to the empirical frequency of the strategies used by the opponent. If the opponent's strategy is static, FP can find the NE. Based on FP intuition, Fictitious Self-Play (FSP)~\citep{heinrich2015fsp} is introduced to make agents play against past versions of themselves to learn optimal strategies to improve the robustness of vanilla self-play. Neural Fictitious Self-Play (NFSP)~\citep{heinrich2016nfsp} is a modern variant that combines FSP with deep learning techniques. It uses neural networks to approximate the BRs. Both NFSP and FSP utilize two distinct types of memory: one to store the agent’s own behavioral data, and the other to record observed opponent behavior.

More recent approaches~\citep{lanctot2017psro} adopt FP in meta-games, where the opponent’s average strategy is approximated directly, eliminating the need to maintain two separate types of memory. In this setting, the MSS of FP is defined as:
\begin{equation}
    \textit{MSS}(\mathcal{P})_{mn}=\left\{
        \begin{array}{l}
        \frac{1}{m-1},~\text{if}~n\le m-1\\
        0,~\text{otherwise}
        \end{array}
        \right..
    \label{eq:fsp}
\end{equation}

In FP, the MSS continues to generate a constant interaction matrix. Compared to vanilla self-play, this approach enhances the robustness of the policy by sampling older versions of its own policies to be used as opponent strategies.

\subsubsection{$\delta$-Uniform Self-Play}

$\delta$-uniform self-play, introduced by \citet{bansal2017deltauniform}, uses the hyper-parameter $\delta$, ranging from 0 to 1, to select the most recent $1 - \delta$ percentage of policies for uniform sampling to generate opponent policies. When policy $m$ is in the training phase, according to Corollary~\ref{cor:sigma_traditionalsp}, only the previous $m-1$ policies are relevant. To select opponents for policy $m$, we sample from the range of policies between $[\lceil \delta(m-1)\rceil, m-1]$, where $\lceil \cdot \rceil$ denotes the ceiling function, which rounds up the input to the nearest integer greater than or equal to it.
When $\delta=0$, the system retains the complete historical memory, whereas $\delta=1$ implies that only the most recent policy is utilized. 
Thus, the MSS of $\delta$-uniform self-play:

\begin{equation}
    \textit{MSS}(\mathcal{P})_{mn}=\left\{
        \begin{array}{l}
        \frac{1}{f(m)},~\text{if}~ \lceil \delta(m-1)\rceil \le n \le m-1\\
        0,~\text{otherwise}
        \end{array}
        \right.,
    \label{eq:deltauniformsp1}
\end{equation}

\begin{equation}
    f(m) = m - \lceil \delta(m-1)\rceil
    \label{eq:deltauniformsp2}.
\end{equation}

In $\delta$-uniform self-play, The MSS generates a constant interaction matrix. \textit{Specially}, if $\delta=0$, it corresponds to FP, and if $\delta=1$, it corresponds to vanilla self-play.

\subsubsection{Prioritized Fictitious Self-Play}
\label{sec:PFSP}
Prioritized Fictitious Self-Play (PFSP) ~\citep{vinyals2019grandmaster} leverages a priority function to allocate a higher probability of selection to agents with higher priorities. Here, $\mathcal{P}$ represents the winning rates, specifically defined as $\mathcal{P}_{mn}=\text{Prob}(\pi_{[m]}~\text{beats}~\pi_{[n]})$. The MSS of PFSP is given by Algo.~\ref{alg:pfspsolver}.
The function $f:[0,1]\rightarrow[0,\infty)$ is a priority function. For example, $f(x) = (1-x)^p$ with $p > 0$ indicates that stronger policies against the currently being trained policy have a higher chance of being chosen as opponents. Alternatively, $f = x(1-x)$ implies that players of similar levels are more likely to be chosen as opponents.
Additionally, in broader terms, $\mathcal{P}$ can also be assessed by other metrics. A similar MSS in ~\citet{berner2019dota} can be utilized to allocate higher probabilities to strategies that perform better or are more challenging to defeat. 
\begin{algorithm}
    \caption{PFSP meta-strategy solver.}
    \label{alg:pfspsolver}
    \begin{algorithmic}[1]
        \STATE \textbf{function} $\mathcal{F}(\mathcal{P})$
        \vspace{1pt}
        \STATE \quad $\sigma_{m+1,n}\leftarrow \frac{f(\mathcal{P}_{m,n})}{\sum_{n\le m}f(\mathcal{P}_{m,n})},~\text{if}~n \le m$
        \vspace{1pt}
        \STATE \quad Append zeros to $\sigma_{m+1}$ until its length is K.
        \vspace{1pt}
        \RETURN $\Sigma$
    \end{algorithmic}
\end{algorithm}

\subsection{PSRO Series}
\label{sec:PSRO}

Similar to traditional self-play algorithms, the PSRO series of algorithms starts with a single policy and gradually expands the policy space by incorporating new oracles. These oracles are policies that approximate optimal responses to the current meta-strategies of other agents.

\subsubsection{Integration into Our Framework}
The PSRO series of algorithms can also be integrated into our proposed framework (Algo.~\ref{alg:framework}). \textbf{First}, similar to traditional self-play algorithms, we also utilize lazy initialization to initialize $\Pi$. \textbf{Second}, we also set $E=1$ and $K$ can be considered as the upper limit for the policy population size in the original PSRO algorithms. \textbf{Third}, in the context of the PSRO series of algorithms, the strategy of our player $\pi_{[k]}^h$ can also be initialized in a general manner. \textbf{Fourth}, we set $h(k)=\emptyset$ since the PSRO series of algorithms do not use any policy condition function for their policies. \textbf{Fifth}, it's crucial to highlight that our framework diverges from the traditional PSRO model~\citep{lanctot2017psro} in how $\sigma_{[k]}$ is defined. In contrast to being the meta-strategy for policy, in our framework, $\sigma_{[k]}$ is the opponent sampling strategy. It means that $\sigma_{[k]}$ here represents the opponent's meta-strategy against policy $k$ for the PSRO series of algorithms. \textbf{Sixth}, compared with traditional self-play methods, the MSSs of the PSRO series are often more complex. For example, some MSSs incorporate concepts from different types of game equilibria~\citep{mcmahan2003doubleoracle, muller2019alphapsro, marris2021joint}. 
\textbf{Lastly}, we also simply follow Assumption~\ref{assum:sigma}. We can derive the Corollary~\ref{cor:sigma_psro} using a similar proof as Corollary~\ref{cor:sigma_traditionalsp}.

\begin{corollary}
    In the PSRO series of algorithms, the interaction matrix $\Sigma$ is a lower triangular matrix.
    \label{cor:sigma_psro}
\end{corollary}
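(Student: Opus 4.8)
The plan is to mirror the proof of Corollary~\ref{cor:sigma_traditionalsp} almost verbatim, since the structural feature driving that result — a policy population that grows incrementally and contains placeholder (ineffective) policies until they are trained — holds identically for the PSRO series. Concretely, I would invoke the ``Integration into Our Framework'' settings established just above: both families use placeholder initialization of $\Pi$ and set $E=1$, so in each iteration exactly one ineffective policy is selected and converted into an effective one, and the population is therefore built up one policy at a time.

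First I would fix the indexing convention under Assumption~\ref{assum:sigma}, so that $\sigma_{ij}$ is the probability that policy $i$ is trained against policy $j$ with $\sum_{j=1}^N \sigma_{ij}=1$. Next I would argue that at the moment policy $i$ is selected for its (single) training step, the only policies that have already been trained — and hence carry meaningful, effective outcomes — are those with index $j\le i$; all policies with $j>i$ are still placeholders and cannot serve as valid opponents. Since a policy can only be optimized against an effective opponent, this forces $\sigma_{ij}=0$ for all $j>i$, which is precisely the statement that $\Sigma$ is lower triangular.

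I do not expect a genuine obstacle here; the corollary is an immediate transplant of the earlier argument, and the excerpt itself signals this by saying the result follows ``using a similar proof as Corollary~\ref{cor:sigma_traditionalsp}.'' The one point warranting a sentence of care is the distinction flagged as the ``Fifth'' integration remark: in our framework $\sigma_i$ is the \emph{opponent sampling strategy} (the opponent's meta-strategy against policy $i$), not the meta-strategy of policy $i$ itself as in the original PSRO of~\cite{lanctot2017psro}. I would note explicitly that this reinterpretation does not affect the triangularity argument, because the argument depends only on \emph{which} policies are available as effective opponents at training time, not on how the sampling weights over those opponents are computed. With that caveat addressed, the proof reduces to the same two observations — incremental growth plus placeholder ineffectiveness — that yielded Corollary~\ref{cor:sigma_traditionalsp}.
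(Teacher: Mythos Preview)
Your proposal is correct and matches the paper's approach exactly: the paper does not even write out a separate proof for Corollary~\ref{cor:sigma_psro}, simply remarking that it follows ``using a similar proof as Corollary~\ref{cor:sigma_traditionalsp},'' and your argument reproduces that proof (placeholder initialization, $E=1$, hence only policies $j\le i$ are effective when policy $i$ is trained, forcing $\sigma_{ij}=0$ for $j>i$). Your additional remark about the reinterpretation of $\sigma_i$ is a harmless clarification that the paper itself omits.
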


\subsubsection{Double Oracle}

Double Oracle (DO)~\citep{mcmahan2003doubleoracle} can be only applied to two-player normal-form games. In this context, we can utilize the payoff matrix as the evaluation matrix. The interaction matrix can be initialized with all zeros, reflecting the initial absence of interactions between strategies. The MSS of DO can then be outlined as described in Algo.~\ref{alg:nashsolver}. 
The opponent sampling strategy $\sigma_{[k]}$ corresponds to the opponent's NE strategy of the restricted game. Therefore, the oracle in DO is a BR rather than an ABR, computing the BR against the current NE opponent strategy of the restricted game. In two-player normal-form settings, DO theoretically can achieve the NE of the full game.

\begin{algorithm}
    \caption{NE-based meta-strategy solver.}
    \label{alg:nashsolver}
    \begin{algorithmic}[1]
        \STATE \textbf{function} $\mathcal{F}(\mathcal{P})$
        \vspace{1pt}
        \STATE \quad $\sigma_{[k+1]}\leftarrow$ SOLVE-NASH$(\mathcal{P}_{1:k,1:k})$ $\triangleright$ Opponent's NE meta-strategy.
        \vspace{1pt}
        \STATE \quad Append zeros to $\sigma_{[k+1]}$ until its length is K.
        \vspace{1pt}
        \RETURN $\Sigma$
    \end{algorithmic}
\end{algorithm}

We take \textit{rock-paper-scissors} as an example, a symmetric two-player game. The policy condition function $h(k)=\emptyset$. The policy population size $K=3$. The policy population $\Pi$ uses lazy initialization, and the interaction matrix $\Sigma$ is initialized with all zeros.
Assume that in the \textbf{first} iteration, the policy $\pi_1$ is set to \textit{rock}. Obviously, the NE-based MSS returns $\sigma_2 = (1, 0, 0)$. 
In the \textbf{second} iteration, the BR oracle adds \textit{paper} to $\Pi$ as $\pi_2$. Consequently, the NE-based MSS yields $\sigma_3 = (0, 1, 0)$, since \textit{paper} always beats \textit{rock}. 
In the \textbf{third} iteration, noting that $\sigma_3$ selects only \textit{paper} and never \textit{rock}, the BR oracle adds \textit{scissors} as $\pi_3$. 
The final interaction matrix is:
\begin{equation}
\Sigma = \begin{bmatrix}
0 & 0 & 0 \\
1 & 0 & 0 \\
0 & 1 & 0 \\
\end{bmatrix}
\end{equation}
Finally, the NE over this policy population becomes the uniform distribution over \textit{rock}, \textit{paper}, and \textit{scissors}, indicating convergence.
This process demonstrates how the DO algorithm iteratively expands the strategy set within our framework.
Also, using DO as an example, we show that, compared with traditional self-play algorithms, the PSRO series employs more sophisticated MSSs to better capture the essence of the game, albeit at a higher computational cost.

\subsubsection{PSRO}

PSRO~\citep{lanctot2017psro} extends DO to more complex games beyond just two-player normal-form games. This approach first introduces the concept of the MSS, which is a broader concept than simply computing NE. The MSS framework allows for a more flexible representation of strategic solutions in various game settings. Many variants of PSRO focus on designing new MSSs to better capture different aspects of strategic play in these more complex games. In the original PSRO framework, the oracle is computed using RL techniques, similar to those described in Algorithm~\ref{alg:ABR_rl}. This allows the algorithm to effectively handle large and intricate strategy spaces, making it applicable to various game scenarios.

\subsubsection{PSRO Variants}

The traditional PSRO algorithm has inspired numerous extensions in recent research. Although the original PSRO aligns well with our proposed framework, its variants may fall outside its scope; however, they remain noteworthy and merit discussion.
The first line of PSRO variants focuses on accelerating convergence.
By establishing a hierarchical structure of RL workers, Pipeline PSRO~\citep{mcaleer2020pipeline} achieves the parallelization of PSRO and simultaneously provides guarantees of convergence. 
Efficient PSRO~\citep{zhou2022efficient} introduces the unrestricted-restricted (URR) game to narrow the selection of opponent policies, thereby preventing the need for meta-game simulations in traditional PSRO. Similar to Pipeline PSRO, Efficient PSRO is equipped to solve URR in parallel. 
In addition, unlike traditional PSRO, which confines the integration of population strategies to the game's initial state, Extensive-Form Double Oracle (XDO)~\citep{mcaleer2021xdo} allows this integration across all information states. It ensures a linear convergence to an approximate NE based on the number of information states, enhancing its tractability in extensive-form games. 
Online Double Oracle (ODO)~\citep{dinh2021online} integrates the no-regret analysis of online learning with DO to improve both the convergence rate to an NE and the average payoff. 
Anytime PSRO~\citep{mcaleer2022anytime} and Self-play PSRO~\citep{mcaleer2022self} are designed to incorporate less exploitable policies into the policy population, facilitating faster convergence.

In parallel, other works aim to make PSRO more computationally tractable in different game settings.
As the number of agents grows, determining BRs becomes exponentially challenging. Mean-field PSRO~\citep{muller2021mean} has been introduced to address this complexity in mean-field games. 
In addition, due to the computational intractability of solving NE in multi-player general-sum games~\citep{daskalakis2009complexity, daskalakis2013complexity} and the selection problem in NE~\citep{goldberg2013selection},~\citet{muller2019alphapsro} put forward $\alpha$-PSRO. 
Instead of NE, $\alpha$-rank~\citep{omidshafiei2019alpharank}, which is unique and polynomial-time solvable in multi-player general-sum games, is introduced in the MSS, and a preference-based best response (PBR) oracle is incorporated in the approach.
Similar to $\alpha$-PSRO, \citet{marris2021joint} proposes the Joint Policy-Space Response Oracle (JPSRO) to tackle multi-player general-sum games. It utilizes CE and coarse correlated equilibrium (CCE) as alternatives to NE to make it computationally tractable in multi-player general-sum games and designs the CE and CCE-based MSS. 

Other lines of research focus on promoting policy diversity, especially in non-transitive games where a single policy is often insufficient.
\citet{balduzzi2019open} introduces an open-ended framework in two-player zero-sum games. This framework enhances the diversity of the strategy population and introduces a gamescape, which geometrically represents the latent objectives in games for open-ended learning. The study proposed two algorithms: Nash response \(\text{PSRO}_\text{N}\) and rectified Nash response \(\text{PSRO}_\text{rN}\). Both algorithms utilize an asymmetric payoff matrix as their performance evaluation metric. Similarly to DO, they employ the Nash-based MSS (Algo.~\ref{alg:nashsolver}). Compared to $\text{PSRO}_\text{N}$, $\text{PSRO}_\text{rN}$ incorporates an additional step within the ABR oracle to focus on the opponents they beat or tie and ignore the opponents they lose to.~\citet{perez2021modelling} uses determinantal point process~\citep{kulesza2012determinantal} to assess diversity and introduces diverse PSRO by incorporating a diversity term into the PSRO oracle. This modification can also be easily implemented in FP and $\alpha$-PSRO. Similarly, ~\citet{liu2021towards} introduces the concepts of behavioral diversity and response diversity and incorporates them into the PSRO oracle. Policy Space Diversity PSRO~\citep{yao2023policy} defines a diversity metric named population exploitability that helps to achieve a full-game NE. 

\subsubsection{R-NaD}
\label{sec:rnad}

Although R-NaD~\citep{perolat2021poincare} is initially described as leveraging evolutionary game theory with a regularization component. Here, we categorized it into the PSRO series with a unique oracle computation technique (Algo.~\ref{alg:ABR_evo}), executed in two stages: the first stage involves transforming the reward based on the regularization policy to make it policy-dependent, and the second stage applies replicator dynamics~\citep{taylor1978evolutionary} until convergence to a fixed point. The oracle added to the policy population $\Pi$ in each iteration is derived from the reward-transformed game, not the original problem's oracle. Nonetheless, this approach ensures that the policy converges to the NE of the original game when the game is monotone. The MSS of R-NaD is the vanilla MSS, as described in Equ.~(\ref{eq:naivesp}). This equation illustrates that the fixed point reached in each iteration, the oracle, is utilized as the regularization policy for the next iteration. 

\subsection{Ongoing-Training-Based Series}
\label{sec:ongoing}

In the PSRO series of algorithms, two key challenges arise. First, truncating the ABR operators during each iteration is often necessary when operating with a limited budget, introducing sub-optimally trained responses into the population. Second, relearning basic skills in every iteration is redundant and becomes untenable when confronted with increasingly formidable opponents~\citep{liu2022neupl}. To address these challenges, the ongoing-training-based series of algorithms emphasizes the repeated ongoing training of all policies. In other words, instead of only \textit{inactive} policies being selected for training, all \textit{active} policies are also likely to be chosen for training.

\subsubsection{Integration into Our Framework}

We can incorporate these ongoing-training-based series of algorithms into our proposed framework (Algo.~\ref{alg:framework}) using the following settings: \textbf{First}, we use immediate initialization to initialize $\Pi$ because, in the ongoing-training-based series, all policies in the policy population are trained together, rather than the policy population growing with each iteration. \textbf{Second}, we set $E = E_{max}>1$, which represents the maximum number of epochs to optimize each policy within the policy population. In other words, each unique policy undergoes iterative training for $E_{max}$ times. \textbf{Third}, since each policy undergoes training for $E_{max}$ times, we utilize $\pi_{[k]}(\cdot|h(k))$ to initialize $\pi_{[k]}^h$. This means that policy updates are self-referential. 
\textbf{Lastly}, under Assumption~\ref{assum:sigma}, different from Collory~\ref{cor:sigma_traditionalsp} and~\ref{cor:sigma_psro}, due to the continuous training process of all policies, we derive Collory~\ref{cor:sigma_ongoing}.

\begin{corollary}
    In the ongoing-training-based series of algorithms, the interaction matrix $\Sigma$ is generally \textbf{not} a lower triangular matrix.
    \label{cor:sigma_ongoing}
\end{corollary}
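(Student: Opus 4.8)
The plan is to trace precisely where the lower-triangular structure arose in Corollaries~\ref{cor:sigma_traditionalsp} and~\ref{cor:sigma_psro}, and then to show that this mechanism is dismantled by the integration settings of the ongoing-training-based series. In those earlier corollaries the entries $\sigma_{ij}$ with $j > i$ vanished for a single structural reason: placeholder initialization together with $E = 1$ ensured that, at the unique moment policy $i$ was trained, every policy with index $j > i$ was still an \emph{ineffective} placeholder, and (as used in the proof of Corollary~\ref{cor:sigma_traditionalsp}) ineffective policies are never selected as opponents. First I would make this dependence explicit, isolating placeholder initialization and $E = 1$ as the two ingredients that force the zeros above the diagonal.

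Next I would check which of these ingredients survive under the ongoing-training settings. By the integration rules stated for this series, $\Pi$ uses \emph{actual} initialization, so every policy is \emph{effective} from the first epoch, and $E = E_{max} > 1$ with the self-referential update $\pi_i^h \leftarrow \pi_i(\cdot\,|\,h(i))$ keeps all policies alive and repeatedly retrained. Consequently, at any epoch in which policy $i$ is being optimized, every other policy $j$ --- including those with $j > i$ --- is already effective and is therefore an admissible opponent. This removes the structural guarantee that $\sigma_{ij} = 0$ for $j > i$: under Assumption~\ref{assum:sigma} the solver $\mathcal{F}$ is now free to place positive mass strictly above the diagonal.

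To turn this into a concrete argument I would exhibit a single representative instance in which $\sigma_{ij} > 0$ for some $j > i$. It suffices to take any ongoing-training MSS whose opponent distribution $\sigma_i$ assigns nonzero weight to a higher-indexed policy; the resulting $\Sigma \in \mathbb{R}^{N \times N}$ then carries a nonzero super-diagonal entry and fails to be lower triangular, which is all that the statement requires.

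The delicate point, and the main obstacle, is the word \emph{generally}. The claim is not that $\Sigma$ can never be lower triangular: a degenerate solver --- for instance one that coincidentally mimics the vanilla MSS --- could still yield a lower-triangular matrix even here. What the corollary asserts, and what the proof must establish, is that lower-triangularity is no longer \emph{forced} by the algorithmic structure, so that it fails in the typical case. I would therefore frame the conclusion carefully: the structural obstruction exploited in Corollaries~\ref{cor:sigma_traditionalsp} and~\ref{cor:sigma_psro} is absent, and a single witness suffices to establish non-lower-triangularity, rather than attempting to prove the (false) universal claim that every admissible $\Sigma$ is non-lower-triangular.
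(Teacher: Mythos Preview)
Your proposal is correct and follows essentially the same approach as the paper: both argue that actual initialization makes every policy effective from the outset and that $E>1$ ensures higher-indexed policies are meaningful, so they may be selected as opponents for policy $i$, breaking the lower-triangular structure. Your treatment is more careful than the paper's---you explicitly contrast with the mechanism behind Corollaries~\ref{cor:sigma_traditionalsp} and~\ref{cor:sigma_psro}, propose exhibiting a concrete witness, and parse the qualifier ``generally'' precisely---whereas the paper's proof simply asserts that policy $k$ with $k\ge i$ is ``likely to be chosen'' and stops there.
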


\begin{proof}
    When policy $m$ is selected for training, policy $n$ $(n \geq m)$ has already been actually initialized and is therefore considered an \textit{active} policy. Furthermore, in epoch $e$ $(e > 1)$, policy $n$ has already been updated and holds significant meaning. As a result, policy $n$, where $n \geq m$, is likely to be chosen as the opponent for policy $m$. Consequently, the interaction matrix $\Sigma$ is generally \textbf{not} a lower triangular matrix.
\end{proof}

\subsubsection{FTW}

Quake III Arena Capture the Flag is a renowned 3D multi-player first-person video game where two teams vie to capture as many flags as possible. The For The Win (FTW) agent~\citep{jaderberg2019human} is designed to perform at human-level proficiency in this game. A pivotal aspect of FTW is its employment of the ongoing-training-based self-play in RL. Specifically, it trains a population of $K$ different policies in parallel, which compete and collaborate with each other. When policy $k$ is undergoing training, FTW samples both its teammates and adversaries from the population. \textit{Specially}, in scenarios where each team comprises a single member, it can be seamlessly integrated into our framework using the subsequent MSS:

\begin{equation}
    \textit{MSS}(\mathcal{P})_{mn}=\frac{1}{K}.
    \label{eq:FTW}
\end{equation}

This essentially means that the interaction graph is densely connected. Moreover, all policies draw upon a unified policy network parameterized by $\phi$. Hence, $\pi_{[k]}(\cdot|h(k))$ can be aptly depicted as $\pi_\phi(\cdot|h(k))$. Furthermore, since these policies are not conditioned on external parameters, it is straightforward to represent the conditioning function $h(k)=\emptyset$.

\subsubsection{NeuPL}

Neural Population Learning (NeuPL)~\citep{liu2022neupl} introduces another critical innovation: it employs a unified conditional network, where each policy is adjusted against a specific meta-game mixture strategy. This is instrumental in enabling transfer learning across policies. Owing to NeuPL's reliance on a unified conditional network parameterized by $\theta$, $\pi_{[k]}(\cdot|h(k))$ can be succinctly represented as $\pi_\theta(\cdot|h(k))$. Since NeuPL policies depend on the opponent sampling strategy $\sigma_{[k]}$, we define $h(k)=\sigma_{[k]}$. 

\subsubsection{Simplex-NeuPL}

Simplex-NeuPL~\citep{liu2022simplex}, which builds on NeuPL, is designed to achieve $\textit{any-mixture optimality}$, which signifies that the formulated policy should exhibit flexibility across a diverse spectrum of adversaries, including those that might not present equivalent competitive prowess. To model the population learning process from a geometric perspective, Simplex-NeuPL introduces the concept of the population simplex. Analogously to its predecessor, Simplex-NeuPL integrates a conditional network to characterize the policy, represented as $\pi_\theta(\cdot|h(k))$ conditioned on the opponent sampling strategy $h(k)=\sigma_{[k]}$. Notably, $\sigma_{[k]}$ may be sampled from the population simplex rather than the MSS, enhancing robustness.

\subsection{Regret-Minimization-Based Series}
\label{sec:regret}

Another line of self-play algorithms is based on regret minimization (RM). 
Unlike other categories, this series aims to minimize cumulative regret over time, a concept rooted in online learning theory. This distinction is crucial in two-player zero-sum games, where these methods enjoy strong theoretical guarantees.
Traditional regret-minimization-based self-play typically doesn't use RL, but later studies have combined them for improved performance. This section will also discuss traditional regret-minimization methods to lay the ground for understanding their integration with RL.

\subsubsection{Integration into Our Framework}

We can also integrate the regret-minimization-based series of algorithms into our proposed framework (Algo.~\ref{alg:framework}) with the following settings: 
\textbf{First}, similar to traditional self-play algorithms and the PSRO series, we use lazy initialization to initialize the policy population $\Pi$. 
\textbf{Second}, we set $E=1$, and $K$ is regarded as the maximum iteration to optimize the policy. 
\textbf{Third}, in each iteration $k$, $h(k)$ represents the specific elements that regret-minimization-based self-play algorithms need to store. Note that this category relies heavily on the information in $h(k)$. For instance, in Counterfactual Regret Minimization (CFR)~\citep{zinkevich2007regret}, it is necessary to store counterfactual regrets for every action in every information set for each player within $h(k)$. Once $h(k)$ is determined, the corresponding policy is also defined. We will discuss CFR in detail in Sec.~\ref{sec:vanilla_cfr}.
\textbf{Fourth}, we initialize $\pi_{[k]}^h$ using $\pi_{[k-1]}(\cdot|h(k-1))$ to utilize the most current trained policy. 
More specifically, $h(k)$ is initialized by $h(k-1)$ and $\pi_{[k]}^h$ is initialized by $\pi_{[k-1]}^h$. In each new iteration, regret-minimization information is continuously added to $h(k)$ (Line~\ref{line:collect_regret} in Algo.~\ref{alg:ABR_rm}). As a result, $h$ accumulates information across all iterations until the end of training.
\textbf{Fifth}, the ABR operator (Algo.~\ref{alg:ABR_rm}) incorporates regret-related information into $h(k)$. Unlike the original CFR, Algo.~\ref{alg:ABR_rm} updates the regrets of each player sequentially. This means that the regrets of player 2 are updated after considering the already-updated regrets of player 1. This adjustment has been shown to accelerate convergence empirically and possesses a theoretical error bound~\citep{burch2019revisiting}. Each $\pi_{[k]}^h$ represents the strategies for all players, and in iteration $k$, player $i$ uses the strategy $\pi_{[k]}^h(i)$.
\textbf{Lastly}, the MSS of the regret-minimization-based series is the vanilla MSS, as described in Equ.~(\ref{eq:naivesp}). Also, we can derive Collory~\ref{cor:sigma_regret}.

\begin{corollary}
    In the regret-minimization-based series of algorithms, the interaction matrix $\Sigma$ is a lower triangular matrix. More specifically, it is a \textbf{unit lower shift matrix}, with ones only on the subdiagonal and zeroes elsewhere.
    \label{cor:sigma_regret}
\end{corollary}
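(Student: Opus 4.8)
The plan is to prove Corollary~\ref{cor:sigma_regret} in two stages: first establish that $\Sigma$ is lower triangular by invoking the already-proven structural argument, and then sharpen this to show that the only nonzero entries lie on the subdiagonal, each equal to one. For the first stage, I would note that the integration settings for the regret-minimization-based series (placeholder initialization, $E=1$, policy population growing one effective policy per iteration) are identical to those of the traditional self-play algorithms and the PSRO series. Hence the exact argument used in the proof of Corollary~\ref{cor:sigma_traditionalsp} applies verbatim: when policy $i$ is chosen for training, only policies $\pi_j$ with $j \le i$ are \textit{effective}, so any opponent must satisfy $j \le i$, forcing $\sigma_{ij}=0$ whenever $j > i$. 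This immediately yields the lower-triangular structure.

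The second, sharper stage is the substantive part. Here I would appeal to the stated fact that the MSS of the regret-minimization-based series is the \textbf{vanilla MSS} of Equation~(\ref{eq:naivesp}). By definition, that MSS sets $\mathcal{F}(\mathcal{P})_{ij}=1$ if $j=i-1$ and $0$ otherwise, regardless of $\mathcal{P}$. I would substitute this definition directly into the entries of $\Sigma$: the only index pair producing a nonzero value is $(i,i-1)$, giving $\sigma_{i,i-1}=1$ and all other entries zero. This places ones precisely on the subdiagonal and zeroes elsewhere, which is exactly the definition of a \textbf{unit lower shift matrix}. I would also remark that consistency with Assumption~\ref{assum:sigma} holds, since each row sums to one: row $i$ has a single unit entry at column $i-1$ (the first row, with no $i-1$, being the degenerate initial case).

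The key steps in order are therefore: (1) confirm the shared integration settings place this series in the same structural regime as Corollary~\ref{cor:sigma_traditionalsp}, delivering lower-triangularity; (2) recall that the MSS is the vanilla MSS from Equation~(\ref{eq:naivesp}); (3) read off the entries of $\Sigma$ from that explicit formula to conclude the subdiagonal-only, all-ones structure. The main obstacle, though minor, is reconciling two descriptions of the same series: the text says each player's regrets are updated \emph{sequentially within} iteration $i$, and that $h(i)$ changes during the iteration, which might suggest a more intricate interaction pattern. I would clarify that this sequential updating is an internal detail of the ORACLE computation in Algorithm~\ref{alg:ABR_rm} and does not alter the opponent-sampling matrix $\Sigma$; the opponent indexing is still governed entirely by the vanilla MSS, so the shift-matrix conclusion is unaffected. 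Once this distinction is made explicit, the corollary follows immediately from the explicit form of Equation~(\ref{eq:naivesp}).
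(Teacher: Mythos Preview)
Your proposal is correct and follows essentially the same approach as the paper: the paper's proof simply observes that regret-minimization-based algorithms always train against the latest strategy $\pi_{i-1}^h$, which is exactly your appeal to the vanilla MSS of Equation~(\ref{eq:naivesp}), and concludes the unit-lower-shift-matrix structure directly. Your first stage (establishing lower-triangularity via Corollary~\ref{cor:sigma_traditionalsp}) is redundant since the sharper subdiagonal conclusion subsumes it, but it does no harm.
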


\begin{proof}
    Regret-minimization-based algorithms always train against the latest strategy; that is, at iteration $k$, the opponent policy is $\pi_{[k-1]}^h$. As a result, the interaction matrix $\Sigma$ becomes a unit lower shift matrix.
\end{proof}

\subsubsection{Vanilla CFR}
\label{sec:vanilla_cfr}

Regret measures the difference between the best possible payoff and the actual payoff. The regret matching algorithm~\citep{hart2000simple} optimizes decisions by selecting strategies based on accumulated positive \textbf{overall regrets}, with higher overall regret strategies being more likely chosen to correct past underperformance. After each round, players update the overall regret values for each strategy. 
The algorithm is guaranteed to converge to the set of NE, and any finite stopping point yields an approximate NE.
However, this method primarily applies to normal-form games because computing overall regret in extensive-form games is challenging. 

\citet{zinkevich2007regret} proposes CFR for extensive-form games. 
In this survey, we refer to it as vanilla CFR to distinguish it from later advancements.
Vanilla CFR maintains both the strategy and the counterfactual regret values for each information set.
Theoretically, the sum of these local regrets provides an upper bound on the overall regret. 
Therefore, the problem can be decomposed into many smaller regret minimization subproblems. CFR focuses on minimizing regret locally at each information set, making the approach computationally tractable.
At each iteration, the strategy is updated using regret matching, and the final strategy is obtained through strategy averaging, where the strategies played across iterations are averaged over time.
This averaged strategy is proven to converge to an NE in two-player zero-sum games.

Vanilla CFR has several shortcomings. Firstly, it requires traversing the entire game tree in each iteration, computationally intractable for larger game trees. Although some efforts have focused on game abstraction to reduce the size of the game tree, greater abstraction can lead to decreased performance. Second, it requires storing counterfactual regrets $R^k(I,a)$ for every action $a$ in every information $I$ at each iteration $k$. These values are stored in $h(k)$ within our proposed framework (Algo.~\ref{alg:framework}), leading to significant storage challenges. 

\subsubsection{Time-Saving Variants of CFR}

Many studies focus on enhancing the time efficiency of CFR. The first approach involves modifying the regret calculation to increase its speed. CFR+~\citep{cfr+} implements regret-matching+ by storing non-negative regret-like values $R^{+,k}(I,a)$, rather than $R^k(I,a)$ in $k^{th}$ iteration. 
Additionally, CFR+ updates the regrets of each player sequentially and adopts a weighted average strategy. 
Moreover, \citet{brown2019solving} introduces the concept of weighted regrets and develops Linear CFR (LCFR) and Discounted CFR (DCFR).
The second approach involves adopting sampling methods. While it requires more iterations, each iteration is shorter, reducing the overall convergence time. 
Monte Carlo CFR (MCCFR)~\citep{lanctot2009monte} divides terminal histories into blocks, with each iteration sampling from these blocks instead of traversing the entire game tree. This allows for calculating sampled counterfactual values for each player, leading to counterfactual regrets that, in expectation, match those of the Vanilla CFR. Vanilla CFR is a specific case where all histories are divided into just one block. MCCFR typically manifests in three forms: outcome-sampling MCCFR, where each block corresponds to a single history; external-sampling MCCFR, which samples opponent and chance nodes; and chance-sampling MCCFR, only focusing on chance nodes. Moreover, \citet{johanson2012efficient} expands on chance-sampling by categorizing based on the handling of public and private chance nodes. Some studies also focus on learning how to reduce the variance of MCCFR to speed up convergence~\citep{schmid2019variance,davis2020low}.
In addition to these two primary approaches, other studies have shown that warm starting~\citep{brown2016strategy} and pruning~\citep{brown2015regret, brown2017dynamic, brown2017reduced} can also accelerate convergence.

\subsubsection{Space-Saving Variants of CFR}
In perfect information games, decomposition reduces problem-solving scale by solving subgames. However, in imperfect information games, defining subgames is challenging due to their intersection with information set boundaries. CFR-D~\citep{burch2014solving} is a pioneering method for decomposing imperfect information games into the main component called the trunk and subgames defined by forests of trees that do not divide any information sets. In each iteration, CFR is applied within the trunk for both players, and a solver is used to determine the counterfactual BR in the subgame. The process includes updating the trunk with the counterfactual values from the subgame's root and updating the average counterfactual values at this root, while the solution to the subgame is discarded. CFR-D minimizes storage needs by only saving $R^k(I_*,a)$ for information sets in the trunk and at each subgame's root, which is denoted by $I_*$, trading off storage efficiency against the time required to resolve subgames. Similar thoughts are echoed in the Continue-Resolving technique used by DeepStack~\citep{moravvcik2017deepstack} and the Safe and Nested Subgame Solving technique used by Libratus~\citep{brown2018superhuman}. We will discuss these approaches in Sec.~\ref{section:Texas Hold'em}, exploring their application to Texas Hold'em.

\subsubsection{Estimation Variants of CFR}
Although these CFR variants advance the field, they can't directly solve large imperfect-information extensive-form games due to their reliance on tabular representations. The typical approach involves abstracting the original game, applying CFR to the abstracted game, and translating strategies back to the original. This abstraction is game-specific and relies heavily on domain knowledge. Additionally, smaller abstractions often yield suboptimal results.
Given these challenges, \citet{waugh2015solving} introduces Regression CFR (RCFR), which employs a shared regressor $\varphi(I,a)$ to estimate counterfactual regrets. Nevertheless, using regression trees as the regressor limits RCFR's applicability to miniature games, and the necessity for manually crafted features remains a drawback.
After advantage-based regret minimization (ARM)~\citep{jin2018regret} merges CFR with deep RL in single-agent scenarios, a growing body of research has focused on applying CFR in conjunction with neural networks to multi-agent scenarios. Double Neural CFR~\citep{li2018double} utilizes two neural networks: one for estimating counterfactual regrets and another for approximating the average strategy. In a similar vein, Deep CFR~\citep{brown2019deep} leverages an advantage network $V(I,a|\theta_p)$ to estimate counterfactual regrets with each player having a distinct hyperparameter $\theta_p$ and employs $\pi(I,a|\theta_\pi)$ for strategy estimation after the training process of the advantage network. Since these two networks are trained in sequence rather than concurrently, the strategy for each intermediate iteration remains conditioned on the output of the advantage network: $h(k)=V(I,a|\theta_p)$. Despite similarities, Deep CFR distinguishes itself from Double Neural CFR through its data collection and proven effectiveness in larger-scale poker games. Single Deep CFR (SD-CFR)~\citep{steinberger2019single} demonstrates training an average strategy network is unnecessary, with only an advantage network required. Building on the foundation of SD-CFR, DREAM~\citep{steinberger2020dream} utilizes a learned baseline to maintain low variance in a model-free setting when only one action is sampled at each decision point. Moreover, advantage regret-matching actor-critic (ARMAC)~\citep{gruslys2020advantage} incorporates the thought of retrospective policy improvement.

\begin{table}
    \centering
    \caption{Overview of representative self-play algorithms within our framework.}
    \label{table:algs}
    \renewcommand{\arraystretch}{1.5} %
    \resizebox{\textwidth}{!}{
        \begin{tabular}{ccccccc}
            \toprule
            \textbf{Algorithms} & \textbf{MSS} & \textbf{h(k)} &  \textbf{Categories} & \textbf{$E$} & \textbf{Initialization of $\Pi$} & \textbf{Initialization of $\pi_{[k]}^h$}  \\
            \hline
    
            Vanilla Self-Play & Equ.~(\ref{eq:naivesp}) & \multirow{4}{*}{$\emptyset$} & \multirow{4}{*}{Traditional Self-Play} & \multirow{4}{*}{$E=1$} & \multirow{4}{*}{Lazy} & \multirow{4}{*}{General}\\
    
            \cline{1-2}
    
            FP & Equ.~(\ref{eq:fsp}) & & & \\
    
            \cline{1-2}
    
            $\delta$-Uniform Self-Play & Equ.~(\ref{eq:deltauniformsp1}) & & & \\
            
            \cline{1-2}
    
            PFSP & Algo.~\ref{alg:pfspsolver} & & & \\
    
            \hline
    
            DO & NE-Based (Algo.~\ref{alg:nashsolver}) & \multirow{5}{*}{$\emptyset$} & \multirow{5}{*}{PSRO Series} & \multirow{5}{*}{$E=1$} & \multirow{5}{*}{Lazy} & \multirow{5}{*}{General} \\
    
            \cline{1-2}
    
            PSRO & General & & & \\
    
            \cline{1-2}
    
            $\alpha$-PSRO & $\alpha$-Rank-Based & & & \\
    
            \cline{1-2}
    
            JPSRO & (C)CE-Based & & & \\
    
            \cline{1-2}
    
            R-NaD & Equ.~(\ref{eq:naivesp}) & & & \\
    
            \hline
    
            FTW & Equ.~(\ref{eq:FTW}) & $\emptyset$ & \multirow{3}{*}{Ongoing-Training-Based} & \multirow{3}{*}{$E > 1$}  & \multirow{3}{*}{Immediate} & \multirow{3}{*}{$\pi_{[k]}^h \leftarrow \pi_{[k]}(\cdot|h(k))$} \\
    
            \cline{1-3}
    
            NeuPL & General & $\sigma_{[k]}$ & & \\
    
            \cline{1-3}
    
            Simplex-NeuPL & General & $\sigma_{[k]}$ & & \\
    
            \hline
    
            Vanilla CFR & \multirow{5}{*}{Equ.~(\ref{eq:naivesp})} & $R^k(I,a)$ & \multirow{5}{*}{Regret-Minimization-Based} & \multirow{5}{*}{$E=1$} & \multirow{5}{*}{Lazy} & \multirow{5}{*}{$\pi_{[k]}^h \leftarrow \pi_{[k-1]}(\cdot|h(k-1))$} \\
    
            \cline{1-1}
            \cline{3-3}
    
            CFR+ & & $R^{+,k}(I,a)$ & & \\
    
            \cline{1-1}
            \cline{3-3}
    
            CFR-D & & $R^k(I_*,a)$ & & \\
    
            \cline{1-1}
            \cline{3-3}
    
            RCFR & & $\varphi(I,a)$ & & \\
    
            \cline{1-1}
            \cline{3-3}
    
            Deep CFR & & $V(I,a|\theta_p)$ & & \\
    
            \bottomrule
    
        \end{tabular}
    }
\end{table}

\subsection{Reassessment of the Framework}
\label{sec:reassess}

After introducing these four categories of self-play algorithms, we will further compare them in this section, explain the rationale behind categorizing the algorithms, and summarize the representative algorithms in Table~\ref{table:algs}.
Moreover, we will illustrate the differences between our framework and other frameworks to demonstrate why our proposed framework is more general.
Finally, we discuss how our framework can guide the development of future self-play algorithms by offering a unified perspective on their design and evaluation.

\subsubsection{Categorization Rationale}

Traditional self-play algorithms and the PSRO series share many similarities. Initially, they require only one randomly initiated policy, and the policy population expands as training progresses. Therefore, in our framework, we use lazy initialization to initialize the policy population and set $E = 1$ for these two categories. The interaction matrix is typically a lower triangular matrix (Corollary~\ref{cor:sigma_traditionalsp} and Corollary~\ref{cor:sigma_psro}).
The primary difference between the PSRO series and traditional self-play algorithms is that the PSRO series employs more complex MSSs to handle tasks with intricate game-theoretical requirements. For example, $\alpha$-PSRO~\citep{muller2019alphapsro} specifically utilizes an $\alpha$-rank-based MSS to tackle multi-player general-sum games. In other tasks, traditional self-play algorithms are more commonly used to reduce the computational cost associated with complex MSSs in the PSRO series.

Unlike the two previously mentioned categories, the ongoing-training-based series adopts a different paradigm. Instead of gradually expanding the policy population and relying on newer policies to be stronger, this approach strengthens all policies simultaneously at each epoch. This method alleviates issues such as early truncation and repeated skill relearning that occur in the above two categories. To integrate this category into our framework, immediate initialization is used for the policy population, and $\pi_{[k]}(\cdot|h(k))$ is utilized to initialize $\pi_{[k]}^h$ to ensure that policy updates are self-referential. Also, the interaction matrix is generally not a lower triangular matrix (Corollary~\ref{cor:sigma_ongoing}).

Lastly, the regret-minimization-based series follows an online learning paradigm that focuses on optimizing overall performance across time rather than on individual episodes. Representative methods such as CFR and its variants are specifically designed for extensive-form imperfect information games, making them particularly effective in such settings. 
The core mechanism of regret-minimization-based training is to iteratively update the regrets associated with different strategies. 
However, tracking and accumulating regret for every information set incurs substantial memory overhead, especially in large game trees. 
Furthermore, available theoretical guarantees of convergence to a Nash equilibrium have so far been established only for two-player zero-sum games; achieving comparable assurances in multiplayer or general-sum settings remains an open research problem.
Our framework uses $h(k)$ to store this information. Since the policies are determined by $h(k)$, only the most recent policy is relevant. Therefore, the interaction matrix is a unit lower shift matrix (Corollary~\ref{cor:sigma_regret}). We also do not need to actually initialize the whole policy population and only need to use $\pi_{[k-1]}(\cdot|h(k-1))$ to initialize $\pi_{[k]}^h$ in the training process.

\subsubsection{Comparing with Existing Frameworks}

Our framework is built upon PSRO~\citep{lanctot2017psro} and NeuPL~\citep{liu2022neupl}. Here, we outline the differences between our framework and these existing frameworks. 
The primary distinction between our framework and PSRO is the use of an interaction matrix $\Sigma:=\{\sigma_{[k]}\}_{k=1}^K\in\mathbb{R}^{K\times C_1}$ to represent the opponent sampling strategy, allowing for the integration of more complex competitive dynamics. Moreover, in our framework, $\sigma_{[k]}$ denotes the opponent sampling strategy, which specifies how to sample opponents' policies against policy $k$ rather than being the meta-strategy of policy $k$.
Additionally, our framework incorporates a policy condition function $h(k)$, making it more general than NeuPL, where policies are conditioned on $\sigma_{[k]}$. This enhancement gives our framework greater expressiveness. 
Furthermore, we describe how to compute the oracle (Line~\ref{framework_line:oracle} in Alg.~\ref{alg:framework}) in three different ways (Alg.~\ref{alg:ABR_rl}, Alg.~\ref{alg:ABR_evo} and Alg.~\ref{alg:ABR_rm}) to provide a clearer understanding.
Also, to the best of our knowledge, our framework is the first self-play framework to integrate the regret-minimization-based series, which is a significant self-play paradigm.

\subsubsection{Implications for Future Self-Play Algorithm Design}

Beyond merely organizing existing algorithms, our framework provides insights into the design of future self-play methods.
First, the four algorithmic categories we identify are not isolated; mechanisms developed in one family can be transplanted into another.
Second, although most previous work has concentrated on the \textit{ORACLE} and \textit{MSS}, our unified framework uncovers additional, under-explored dimensions that warrant further investigation. 
The remainder of this section first illustrates how techniques can migrate across categories and then presents two specific research directions that arise when the full framework design space is considered.

\textbf{Cross-category inspiration.} 
Recent advances in self-play have explicitly blended techniques across different algorithmic categories, yielding hybrid methods that transcend traditional boundaries.
RM-BR~\citep{johanson2012finding} pairs RM for one player with a BR oracle for the other, leveraging RM’s equilibrium guarantees and exploitative strength.
Anytime PSRO~\citep{mcaleer2022anytime} proposes RM-BR DO by incorporating a regret-minimization component into DO to ensure that exploitability monotonically decreases as new strategies are added.
ODO~\citep{dinh2021online} fuses another online no-regret learning method multiplicative weights update (MWU)~\citep{freund1999adaptive} with DO. 
Our framework naturally subsumes ODO. Policies are still added iteratively, and the auxiliary information $h(k)$ records the average regret-based losses. MWU acts as the \textit{ORACLE}. Crucially, unlike the original DO, this MWU oracle need not compute a full Nash equilibrium, reducing computational cost; yet ODO still provably converges to a Nash equilibrium in two-player zero-sum normal-form games while preserving rationality, allowing agents to exploit sub-optimal opponents. 
Taken together, these studies show that innovation thrives when regret-minimization-based series intersect with PSRO series, and they suggest that integrating other categories could be just as fruitful.

\textbf{Prioritised re-training when \(E \gg 1\).}
When each policy is reused across many encounters (\(E\gg1\)), repeatedly updating all members becomes computationally expensive.  
A natural open question is: {\em which policies are worth further improvement}?  
For instance, we can formalize this as a multi-armed bandit: give every policy a score that estimates how much new information an additional update would add (e.g., its expected information gain), and let that score decide where to spend oracle calls. 
This bandit scheduling automatically balances exploitation (further refining the strongest policies) with exploration (re-examining under-trained or atypical policies that might expose blind spots).

\textbf{Auxiliary information \(h(k)\) and knowledge transfer.}
The history kernel \(h(k)\) need not serve merely as a passive log of past interactions; instead, it can be designed to encode rich auxiliary domain knowledge, such as exploitable behavioral patterns, partial payoff structures, or latent embeddings of opponents.  
By treating \(h(k)\) as a learnable memory object that can be {\em inherited} or {\em shared}  across subsets of agents, we enable a form of continual-learning self-play. 
In this setting, knowledge acquired by one agent can propagate throughout the population organically, without requiring intervention from an external oracle. This facilitates more efficient collective learning and strategic evolution within multi-agent systems.

\section{Empirical Analysis}
\label{sec:empirical_analysis}

In this section, we introduce iconic applications of self-play by categorizing the scenarios into three distinct groups: board games, which typically involve perfect information; card games, which usually involve imperfect information; and video games, which feature real-time actions rather than turn-based play. We then illustrate how self-play is applied in these complex scenarios and provide a comparative analysis of these applications in Table~\ref{tab:applications}.

\subsection{Board Games}

Board games, the majority of which are perfect information games, were previously revolutionized by the introduction of two essential techniques: position evaluation and MCTS~\citep{coulom2006efficient, kocsis2006bandit}. These methodologies, with minor modifications, demonstrated superhuman effectiveness in solving board games such as chess~\citep{campbell2002deep}, checkers~\citep{schaeffer1992world}, othello~\citep{buro1999simple}, backgammon~\citep{tesauro1996line}, and Scrabble~\citep{sheppard2002world}. In contrast, the application of these techniques to the game of Go with an estimated $2.1 \times 10^{170}$ legal board configurations, only enabled performance at the amateur level~\citep{bouzy2004monte, coulom2007computing, baudivs2011pachi, enzenberger2010fuego, gelly2007combining}. 
In light of this, our discussion will specifically focus on the game of Go to illustrate the application of vanilla self-play. 
In addition to Go, we will explore Stratego, a board game with imperfect information, in contrast to most board games that only involve perfect information.

\subsubsection{Go}
\label{sec:app/board/go}
Go is an ancient board game, which is played on a grid of 19x19 lines, where two players alternately place black and white stones aiming to control the largest territory. 
The paradigm of Go artificial intelligence (AI) is revolutionized with the launch of DeepMind's AlphaGo series~\citep{silver2016alphago, silver2017alphagozero, silver2018alphazero, schrittwieser2020muzero}, which leveraged the power of vanilla self-play to significantly elevate performance.
In AlphaGo~\citep{silver2016alphago}, the training regime can be split into three stages. In the first stage, supervised learning with expert data trains a fast policy network $p_\pi(a|s)$ for rollouts in MCTS and a precise policy network $p_\sigma(a|s)$. The second stage employs vanilla self-play combined with RL to get a refined policy network $p_\rho(a|s)$ based on $p_\sigma(a|s)$ and subsequently trains a value network $v_\theta(s)$. Specifically, $p_\rho(a|s)$ is refined by competing against a randomly selected historical version $p_{\rho^-}(a|s)$, similar to the MSS shown in Equ.~(\ref{eq:fsp}), while $v_\theta(s)$ is trained using the game samples collected by vanilla self-play of $p_\rho(a|s)$. In the third stage, MCTS integrates the policy and value networks to select actions.

Based on AlphaGo, AlphaGo Zero~\citep{silver2017alphagozero} does not require any expert data except game rules. It utilizes only one network $f_\theta(s)$ to concurrently predict the action probabilities $\text{Pr}(a|s)$ and the state value $v$. Each move is generated by MCTS with guidance from $f_\theta(s)$ to aid MCTS expansion, unlike the rollouts used in AlphaGo. Vanilla self-play is employed to generate data and refine $f_\theta(s)$ with the current best policy competing against itself, a process analogous to the MSS referenced in Equ.~(\ref{eq:naivesp}). A new policy to be incorporated into the policy pool must surpass a 55 percent win rate against its predecessor, aligning with the stipulation set in Algo.~\ref{alg:ABR_rl} at Line~\ref{alg:ABRvalidation}.
AlphaZero~\citep{silver2018alphazero} extends AlphaGo Zero to include games beyond Go, such as Chess and Shogi. Notably, a draw is introduced as an additional expected outcome, and data augmentation is omitted due to the asymmetry of Chess and Shogi. Concerning the vanilla self-play procedure, the only difference between AlphaZero and AlphaGo Zero is that AlphaZero utilizes the newly updated network without the validation process present in AlphaGo Zero.
Building upon AlphaZero, MuZero~\citep{schrittwieser2020muzero} takes learning from scratch to the next level, even operating without predefined game rules. MuZero incorporates ideas from model-based RL to model the dynamics of games. In addition to the prediction network $f$ (similar to the networks in AlphaGo Zero and AlphaZero), MuZero introduces a dynamics network $g$ to model the MDP and a representation network $h$ to map observations to hidden states. These three networks are trained jointly. MuZero employs MCTS guided by the three networks above to make decisions. The vanilla self-play in MuZero operates similarly to that in AlphaZero. In practice, in addition to excelling in board games like Go, MuZero also achieves state-of-the-art performance in Atari games.

\subsubsection{Stratego}

Unlike most board games, which are perfect information games, Stratego, a two-player imperfect information board game, distinguishes itself by incorporating elements of memory, deduction, and bluffing. This complexity is further amplified by the game's long episode length and many potential game states, estimated $10^{535}$~\citep{perolat2022mastering}. The game is divided into two phases: the deployment phase, where players secretly arrange their units, and the game-play phase, where the objective is to deduce the opponent's setup and capture their flag. The depth and computational complexity of the game remained a challenge until breakthroughs such as DeepNash~\citep{perolat2022mastering} showed promising advances in AI's ability to tackle it.
DeepNash scales up evolution-theory-based self-play method R-NaD~\citep{perolat2021poincare} (discussed in Sec.~\ref{sec:rnad}) to neural R-NaD. It employs a neural network with four heads: one for value prediction, one for the deployment phase, one for piece selection, and one for piece displacement. Neural Replicator Dynamics~\citep{hennes2020neural} is utilized to obtain the approximate fixed-point policy. 
DeepNash holds the third-place ranking among all professional Gravon Stratego players and wins nearly every match against existing Stratego bots.

\subsection{Card Games}

Unlike board games, card games are typically imperfect information games with a larger state space and greater complexity. 
Some regret-minimization-based self-play methods, particularly CFR and its variants, are well-suited for handling imperfect information and have achieved strong results in card games. However, recent studies have shown that other self-play methods without regret minimization can also perform well in such settings.
We introduce three representative card games: Texas Hold’em, DouDiZhu, and Mahjong, to illustrate how self-play is utilized in card games.

\subsubsection{Texas Hold'em}
\label{section:Texas Hold'em}

\textbf{Texas Hold'em}, a popular poker game with 2-10 players, is known for its strategic depth and bluffing elements. The two-player variant is called \textbf{heads-up Texas Hold'em}. The gameplay begins with each player receiving two private cards (hole cards), followed by a round of betting. Subsequently, three community cards (the flop) are revealed, leading to another betting round. This is followed by dealing a fourth (the turn) and a fifth community card (the river), each accompanied by further betting rounds. The objective is to construct the best five-card poker hand from any combination of hole cards and community cards. Texas Hold’em offers two betting formats: limited betting and no-limit betting. The latter one is noted for its complexity, allowing players to bet any amount up to their entire stack of chips. While simplified versions such as \textbf{Kuhn Poker} and \textbf{Leduc Poker} serve valuable roles in theoretical analysis, we will focus on the algorithms designed to compete in full Texas Hold'em.

\textbf{Heads-up limit Texas Hold'em (HULHE)}, the simplest form with approximately $3.16\times 10^{17}$ game states, was not solved until the introduction of Cepheus~\citep{bowling2015heads}. It utilizes fixed-point arithmetic with compression and regret-minimization-based self-play method CFR+~\citep{cfr+} to address the issues of storage and computation, respectively, resulting in superhuman performance.
Deep CFR~\citep{brown2019deep} combines regret-minimization-based self-play method CFR with deep neural networks.
Furthermore, some studies do not adopt the regret-minimization-based series of self-play; instead, they use the traditional self-play method.
NSFP~\citep{heinrich2016nfsp} introduces a self-play method combined with end-to-end RL training, also achieving competitive performance in HULHE. 
Poker-CNN~\citep{yakovenko2016poker} utilizes self-play to learn convolutional networks to solve a video version of HULHE.

After solving HULHE, research focus shifts to \textbf{heads-up no-limit Texas Hold'em (HUNL)} with significantly larger game states, approximately $10^{164}$~\citep{johanson2013measuring}. Thus, traversing the game tree as Cepheus does in HULHE is impossible. 
DeepStack~\citep{moravvcik2017deepstack} is a regret-minimization-based self-play method combined with continual re-solving. This method focuses on a subtree of limited depth and breadth and estimates the outcomes of the furthest reaches of this subtree. 
Libratus~\citep{brown2018superhuman} develops a blueprint strategy, leveraging an enhanced version of regret-minimization-based self-play method MCCFR~\citep{lanctot2009monte}. 
Recursive Belief-based Learning (ReBeL)~\citep{brown2020combining} introduces the public belief state (PBS), transforming the imperfect-information game into a perfect-information game with continuous state space. ReBeL utilizes the regret-minimization-based self-play method CFR-D, combined with RL and search, to train both the value and policy networks.
Unlike the algorithms mentioned above, AlphaHoldem~\citep{zhao2022alphaholdem} does not use regret-minimization-based self-play. Instead, it proposes $K$-best self-play that selects the top $K$ policies, based on their ELO ratings~\citep{elo1978rating}, to generate samples.

Pluribus~\citep{brown2019superhuman}, based on Libratus, addresses \textbf{six-player no-limit Texas Hold'em}. Similar to Libratus, Pluribus utilizes a regret-minimization-based self-play method MCCFR~\citep{lanctot2009monte} to develop its blueprint strategy. It conducts a depth-limited search before executing actions. Different from Libratus, Pluribus maintains a streamlined policy pool of only four strategies, assuming that its opponents might adjust their strategies during gameplay among these four strategies, which allows Pluribus to manage complexity more efficiently.

\subsubsection{DouDiZhu}

DouDizhu (a.k.a. Fight the Landlord) is a three-player strategic card game. In this game, one player takes on the role of the landlord and competes against the other two players, the peasants. 
The game is played in two main stages: the bidding stage and the card play stage. During the bidding stage, players vie to become the landlord. During the card play stage, players take turns playing cards in various combinations, intending to be the first to empty their hands.
DouDiZhu is characterized by imperfect information, as players can only see their own cards and the cards played. The essence of the game lies in cooperation among peasants and competition between the two factions. It has an estimated $10^{76}\sim 10^{106}$ possible game states~\citep{zha2019rlcard} and an action space comprising $27472$ possible moves~\citep{yang2022perfectdou}.

DeltaDou~\citep{jiang2019deltadou} is the first algorithm to achieve expert-level performance in DouDizhu.
Similar to AlphaZero~\citep{silver2018alphazero}, DeltaDou utilizes vanilla self-play to generate samples to train policy and value networks.
DouZero~\citep{zha2021douzero} also uses vanilla self-play to generate data. It reduces training costs by opting for a sampling method over the tree search approach. 
Based on DouZero, DouZero+~\citep{zhao2022douzero+} incorporates opponent modeling, assisting the agent in making more informed decisions. 
PerfectDou~\citep{yang2022perfectdou} utilizes vanilla self-play under a Perfect-Training-Imperfect-Execution framework. 

\subsubsection{Mahjong}

Mahjong has evolved into various global variants, including the famous Japanese version known as Riichi Mahjong. This game is typically played by four players who must navigate both the visible aspects of the game, such as discarded tiles, and the hidden elements, like their own hand and the unseen public tiles. The strategic depth and complexity of Mahjong pose significant challenges. Despite ongoing research, these findings have yet to reach expert human levels~\citep{kurita2020method, gao2019building, mizukami2015building}. 
Difficulties are navigating incomplete information, dynamically adapting strategies to multiple opponents, and contending with complex winning rules and an enormous number of possible game states, estimated at $10^{169}$~\citep{zha2019rlcard}.
Suphx~\citep{li2020suphx} is recognized as one of the first algorithms to master Mahjong, achieving a performance level comparable to expert human players, precisely 10 dan on Tenhou~\citep{tenhou}, the most popular online Mahjong platform. Initially, Suphx employs supervised learning, utilizing expert data to train its model. It then advances its capabilities through vanilla self-play combined with RL.
Similarly, NAGA~\citep{naga}, developed by Dwango Media Village, and LuckyJ~\citep{luckyj}, designed by Tencent, have also achieved the rank of 10 dan on Tenhou. Furthermore, LuckyJ has even defeated human professional players.

\subsection{Video Games}

In contrast to traditional board games and card games, video games often feature real-time actions, long trajectories, and increased complexity stemming from a broader range of actions and observations. We illustrate representative video games showcasing self-play’s impact.

\subsubsection{StarCraft II}

StarCraft is a real-time strategy (RTS) game. It has three distinct species: the Terrans, Zerg, and Protoss, each with unique units and strategic options that enhance the gameplay complexity. Renowned for its balanced gameplay, strategic depth, and intense competitiveness, the game challenges players to gather resources, construct bases, and build armies. Victory requires meticulous planning and tactical execution, with defeat occurring when a player loses all their buildings.
AlphaStar~\citep{vinyals2019grandmaster} dominates the 1v1 mode competitions in StarCraft II and has defeated professional players. Its framework is similar to AlphaGo~\citep{silver2016alphago}, initially utilizing supervised learning to train the policy with expert data. Subsequently, it uses a hierarchical self-play method combined with end-to-end RL to train the networks. More specifically, The proposed self-play method divides all the agents into three types: main agents, league exploiters, and main exploiters. It maintains a policy pool of past players that records all these types of agents. 
\textbf{Main agents} engage in traditional self-play algorithms like FP and PFSP, competing against main agents themselves and other agents in the policy pool. They are periodically added to the pool and never reset. \textbf{League exploiters} also use a traditional self-play method PFSP to play against all policy pool agents, added to the pool if they show a high win rate and potentially reset to expose global blind spots. \textbf{Main exploiters} only compete with main agents to improve their robustness, are added to the pool after achieving a high win rate or specific training steps, and are reset upon each addition.
Among those three agent types, the main agent is the core agent and embodies the final AlphaStar strategy. However, the introduction of three agent types significantly increases the training computation. Further studies~\citep{han2020tstarbot, wang2021scc, huang2024robust} have enhanced the league self-play training procedure.

\subsubsection{MOBA Games}

Multiplayer Online Battle Arena (MOBA) games are a popular video game genre that blends RTS with role-playing elements. In typical MOBA games, two teams of players control their unique characters, known as heroes, and compete to destroy the opposing team's main structure, often referred to as the base. Each hero has distinct abilities and plays a specific role within the team, such as Warrior, Tank, or Support. Managing multiple lanes and battling under the fog of war, which obscures parts of the map, are critical aspects of the gameplay. 

OpenAI Five~\citep{berner2019dota} defeated the world champion team in a simplified version of Dota 2, which supported only 17 heroes and excluded complex features like summons and illusions.
The training process introduces a self-play method that combines two traditional self-play algorithms: with an 80\% probability of engaging in vanilla self-play and a 20\% probability of employing a technique similar to the PFSP used in AlphaStar~\citep{vinyals2019grandmaster}. 
This technique selects each policy from the policy pool based on its quality score, which is continuously updated from competition results throughout training. Higher quality scores increase the likelihood of a policy being selected. 
OpenAI Five also requires extensive training resources.

Another notable MOBA game, especially viral in China, is Honor of Kings. 
The 1v1 mode is conquered by~\citet{ye2020mastering}, which boasts a significant winning rate against top professional players. 
It utilizes a traditional self-play method $\delta$-uniform self-play to train the RL networks.
The 5v5 mode was later mastered by~\citet{ye2020towards} as well. Unlike OpenAI Five, which supported only 17 heroes, this work expands the hero pool to 40 heroes, substantially increasing the number of possible lineup combinations.
It proposes a new self-play method referred to as curriculum self-play learning (CSPL). 
Specifically, the training process is divided into three stages.
The first stage involves training fixed lineups through self-play and utilizing human data to balance the two teams to aid policy improvements. 
The second stage employs multi-teacher policy distillation to produce a distilled model. 
The final stage uses this distilled model as the initial model for another round of self-play with randomly picked lineups. 
This approach defeats professional player teams. 
The self-play generated data is also used to learn compelling lineup drafting by utilizing MCTS and neural networks~\citep{chen2021heroes}.

\subsubsection{Google Research Football}

Google Research Football (GRF)~\citep{kurach2020google} is an open-source football simulator emphasizing high-level actions. It initially offers two scenarios: the football benchmark and the football academy with 11 specific tasks. Here, we focus exclusively on the football benchmark because it presents a more complex scenario that better demonstrates the effects of self-play. GRF is particularly challenging due to the need for cooperation among teammates and competition against opposing teams. It features long trajectories with 3000 steps per round, stochastic transitions, and sparse rewards. 

WeKick~\citep{wekick} wins the GRF competition on Kaggle~\citep{kaggle}, which simplifies the game dynamics by allowing competitors to control only one player, either the ball carrier on offense or the nearest defender on defense. It employed self-play strategies similar to those used in league training~\citep{vinyals2019grandmaster}. It initializes its opponent policy pool using strategies developed through RL and Generative Adversarial Imitation Learning (GAIL)~\citep{ho2016generative} to facilitate the training process. 

Further research delves into the full football game rather than the simplified version. Team-PSRO~\citep{mcaleer2023team} extends PSRO series of self-play algorithms to team games, outperforming baselines in the 4v4 version of the full GRF.
In the context of the 11v11 version, where the goalkeeper is rule-based controlled, TiKick~\citep{huang2021tikick} utilizes vanilla self-play to collect samples and then employs imitation learning.
Fictitious Cross-Play (FXP)~\citep{xu2023fictitious} proposes a new self-play method similar to AlphaStar~\citep{vinyals2019grandmaster}. It introduces two populations: the main population and the counter population. Policies in the counter population improve solely by cross-playing with policies in the main population as opponents, while policies in the main population engage in playing with policies from both populations. FXP achieves a win rate of over 94\% against TiKick. 
TiZero~\citep{lin2023tizero}, a follow-up to TiKick, combines curriculum learning with FP and PFSP~\citep{vinyals2019grandmaster} to avoid expert data reliance, achieving a higher TrueSkill rating~\citep{herbrich2006trueskill} than TiKick.

\begin{table*}[htbp]
\centering
\setlength{\tabcolsep}{4pt}
\renewcommand{\arraystretch}{1.7}

\noindent\makebox[\textwidth][c]{%
  \rotatebox{90}{%
    \begin{minipage}{\dimexpr\textheight-2\baselineskip-1em\relax}
      \centering
      \caption{Overview of representative studies in empirical analysis.}
      \label{tab:applications}
      \vspace{0.6em}

      \resizebox{\dimexpr\textheight-2\baselineskip-1em\relax}{!}{%
        \begin{tabular}{ccccccccc}
          \toprule
          \multirow{2}{*}{Category} & \multirow{2}{*}{Game} & Number of & Perfect & Number of & \multirow{2}{*}{Algorithms} & \multirow{2}{*}{Self-play Method} & \multirow{2}{*}{Search} & \multirow{2}{*}{Expert Data} \\
          & & Players & Information & Game States & & & & \\
          \cline{1-9}

          \multirow{7}{*}{Board Game} & \multirow{2}{*}{Chess} & \multirow{2}{*}{2} & \multirow{2}{*}{$\checkmark$} & \multirow{2}{*}{$10^{45}$} & AlphaZero & Vanilla SP & $\checkmark$ & $\times$ \\
          \cline{6-9}
          & & & & & MuZero & Vanilla SP & $\checkmark$ & $\times$ \\
          \cline{2-9}
          & \multirow{4}{*}{Go} & \multirow{4}{*}{2} & \multirow{4}{*}{$\checkmark$} & \multirow{4}{*}{$10^{360}$} & AlphaGo & FP & $\checkmark$ & $\checkmark$ \\
          \cline{6-9}
          & & & & & AlphaGo Zero & Vanilla SP & $\checkmark$ & $\times$ \\
          \cline{6-9}
          & & & & & AlphaZero & Vanilla SP & $\checkmark$ & $\times$ \\
          \cline{6-9}
          & & & & & MuZero & Vanilla SP & $\checkmark$ & $\times$ \\
          \cline{2-9}
          & Stratego & 2 & $\times$ & $10^{535}$ & DeepNash & R-NaD & $\times$ & $\times$ \\
          \cline{1-9}

          \multirow{10}{*}{Card Game} & \multirow{2}{*}{HULHE} & \multirow{2}{*}{2} & \multirow{2}{*}{$\times$} & \multirow{2}{*}{$10^{17}$} & Cepheus & CFR+ & $\checkmark$ & $\times$ \\
          \cline{6-9}
          & & & & & NFSP & NFSP & $\times$ & $\times$ \\
          \cline{2-9}
          & \multirow{3}{*}{HUNL} & \multirow{3}{*}{2} & \multirow{3}{*}{$\times$} & \multirow{3}{*}{$10^{164}$} & DeepStack & CFR+ & $\checkmark$ & $\times$ \\
          \cline{6-9}
          & & & & & Libratus & MCCFR & $\checkmark$ & $\times$ \\
          \cline{6-9}
          & & & & & ReBel & CFR-D / FP & $\checkmark$ & $\times$ \\
          \cline{2-9}
          & Six-player No-limit Texas Hold'em & 6 & $\times$ & $>10^{164}$ & Pluribus & MCCFR & $\checkmark$ & $\times$ \\
          \cline{2-9}
          & \multirow{3}{*}{DouDiZhu} & \multirow{3}{*}{3} & \multirow{3}{*}{$\times$} & \multirow{3}{*}{$10^{76}\sim 10^{106}$} & DeltaDou & Vanilla SP & $\checkmark$ & $\times$ \\
          \cline{6-9}
          & & & & & DouZero & Vanilla SP & $\times$ & $\checkmark$ \\
          \cline{6-9}
          & & & & & PerfectDou & Vanilla SP & $\times$ & $\times$ \\
          \cline{2-9}
          & Mahjong & 4 & $\times$ & $10^{169}$ & Suphx & Vanilla SP & $\checkmark$ & $\checkmark$ \\
          \cline{1-9}

          \multirow{4}{*}{Video Game} & StarCraft II & 2 & $\times$ & \multirow{4}{*}{/} & AlphaStar & FP \& PFSP & $\times$ & $\checkmark$ \\
          \cline{2-4}\cline{6-9}
          & Dota 2 & 10 & $\times$ & & OpenAI Five & Vanilla SP \& PFSP & $\times$ & $\times$ \\
          \cline{2-4}\cline{6-9}
          & Honor of Kings & 10 & $\times$ & & MOBA AI & $\delta$-uniform SP & $\checkmark$ & $\checkmark$ \\
          \cline{2-4}\cline{6-9}
          & Google Research Football & 22 & $\checkmark$ & & TiZero & FP \& PFSP & $\times$ & $\times$ \\
          \bottomrule
        \end{tabular}
      }%
    \end{minipage}%
  }%
}%
\end{table*}

\section{Open Problems and Future Work}
\label{sec:discussion}

Self-play methods have shown remarkable performance by leveraging iterative learning and adapting to complex environments. However, significant challenges and open questions remain, presenting opportunities for further research.

\subsection{Theoretical Foundation}

Although NE has been shown to exist in games with finite players and finite actions~\citep{nash1950non}, computing NE with self-play algorithms in larger games remains challenging and consequently, many studies aim to achieve approximate NE~\citep{li2024survey}. However, in some cases, even computing an approximate NE is difficult~\citep{daskalakis2013complexity}. 
Some research has resorted to alternative solution concepts, such as CE~\citep{marris2021joint} and $\alpha$-rank~\citep{muller2019alphapsro}.
While many successful self-play algorithms are grounded in theoretical intuition, formal game-theoretic guarantees for their effectiveness in large, imperfect-information games remain limited.
For instance, approaches such as AlphaGo~\citep{silver2016alphago}, AlphaStar~\citep{samvelyan2019starcraft}, and OpenAI Five~\citep{berner2019dota} achieve empirical success. 
However, under realistic constraints such as finite computational resources and limited exploration, their convergence or optimality remains difficult to formally characterize within standard game-theoretic frameworks.
Future work may benefit from further narrowing the gap between empirical success and theoretical understanding, either by extending existing theoretical tools to more complex settings or by designing new self-play methods with provable guarantees under realistic assumptions.

\subsection{Non-stationarity of the Environment}

In the self-play framework, the opponents are a vital component of the environment, and the strategies of the opponent players evolve as training progresses.
This evolution can cause the same strategy to lead to different results over time, creating a non-stationary environment. This problem is also shared by the MARL area. Future research should aim to develop self-play algorithms that are more robust and can adapt to changing conditions. For example, incorporating opponent modeling~\citep{zhao2022douzero+} into self-play can help agents anticipate changes in opponent strategies and adjust their own strategies proactively, making them more robust to environmental changes.

\subsection{Scalability and Training Efficiency}

The scalability of self-play methods faces significant challenges as the number of teams and players within those teams increases. As the number of participants grows, the complexity of interactions explodes. For example, in OpenAI Five~\citep{berner2019dota}, the hero pool size is limited to only 17 heroes. MOBA AI~\citep{ye2020towards} extends this to a 40-hero pool with the help of curriculum learning, but it still cannot cover the entire hero pool available in the actual game. One potential solution is leveraging players' inherent connections to optimize the learning process. For instance, using graph-based models to represent and exploit the relationships between players can help manage and reduce the complexity of large-scale multi-agent environments.
These scalability issues are fundamentally rooted in the limited training efficiency of self-play methods from two aspects including computation and storage.
The first issue is computational efficiency, which is induced by the iterative nature of self-play where agents repeatedly play against themselves or past versions.
Moreover, although forming more complex populations and competitive mechanisms~\citep{samvelyan2019starcraft} can enhance the intensity and quality of training, it further exacerbates the demand for computational resources. Techniques such as parallel computing, distributed learning, and more efficient neural network architectures could be explored to address these challenges.
The second issue is storage because self-play requires maintaining a policy pool.
Even when using a shared network architecture, storing the parameters of large models can be problematic. This issue is particularly pronounced in regret-minimization-based self-play algorithms, which must store the regrets for each information set and potential action. Managing both the computational load and storage requirements is essential for improving the training efficiency and scalability.

\subsection{With Large Language Models}

With their remarkable capability and emergent generalizability, large language models (LLMs) have been regarded as a potential foundation for achieving human-level intelligence~\citep{achiam2023gpt}, and self-play methods have been proposed to fine-tune LLMs, enhance LLMs' reasoning performance, and build LLM-based agents with strong decision-making abilities. 
Post-training fine-tuning~\citep{ouyang2022training,bai2022training} is a key step in aligning LLM with more desired behaviors, but it requires a huge amount of human-annotated data.
To reduce reliance on human-annotated data, Self-play fIne-tuNing (SPIN)~\citep{chen2024self} introduces a self-play mechanism to generate training data using the LLM itself and fine-tuning the LLM by distinguishing self-generated responses from human-annotated data.
Another line of work~\citep{munos2023nash,swamy2024minimaximalist,wu2024self} formulate the alignment problem as a two-player constant-sum game and use self-play methods to solve the game.
Some other work~\citep{huang2022large,yuan2024self} also utilizes model-generated data to fine-tune LLMs with minimal human annotations.
The idea of self-improvement has also been applied to improve the reasoning ability of LLMs. 
Self-Play of Adversarial Game (SPAG)~\citep{cheng2025self} observes that self-play in a two-player adversarial language game called Adversarial Taboo can boost the LLM's performance on various reasoning benchmarks.
Besides improving the capability of LLMs, self-play methods have also contributed to building LLM-based agents with strong strategic abilities. 
A representative work is Cicero~\citep{meta2022human}, which achieves human-level play in the game of Diplomacy by combining language models with an RL policy trained by self-play.
Cicero uses the self-play policy to produce an intended action and prompts the language model to generate languages conditioned on the policy's intent.
Another line of work~\citep{xu2024language,xu2025learning} also combines LLM with self-play to train strategic language agents in the Werewolf game using reinforcement learning or counterfactual regret minimization.
Despite recent progress, applying self-play with LLMs is still underexplored and requires further research.

\subsection{Real-World Applications}

Self-play is effective in addressing some problems abstracted from real-world situations through its iterative learning approach. In the field of economics, for instance, self-play is used to enhance supervised learning models in multi-issue bargaining tasks \citep{lewis2017deal}. Furthermore, self-play proves beneficial in solving combinatorial optimization problems (COPs) such as the traveling salesman problem (TSP) and the capacitated vehicle routing problem (CVRP) \citep{wang2024asp}. Within the domain of traffic, self-play facilitates the development of human-like autonomous driving behaviors \citep{cornelisse2024human} and enables vehicles to learn negotiation strategies to merge on or off roads \citep{tang2019towards}, although currently within 2D simulators.

Deploying self-play directly in the physical world remains challenging. 
Its iterative, trial‑and‑error nature is computationally expensive and can produce behaviors that are impractical or even unsafe outside a controlled environment. Consequently, most recent work performs self‑play in simulators, and practical adoption hinges on bridging the simulation‑to‑reality (Sim2Real) gap.
When this gap is small, self‑play has already proved useful. For instance, self-play can be well employed to enhance video streaming capabilities \citep{huang2021zwei}, and to address the image retargeting problem \citep{kajiura2020self}. Moreover, EvoPlay \citep{wang2023self} leverages self-play to design protein sequences, utilizing the advanced AlphaFold2 simulator \citep{jumper2021highly} to narrow the Sim2Real gap. 
In multi-robot systems, self-play has been recently utilized for competitive robot sports tasks like quadruped and humanoid football~\citep{liu2022motor,haarnoja2024learning,xiong2024mqe,su2025toward}, robot-arm and humanoid table tennis~\citep{dambrosio2025achieving, su2025hitter}, and multi-drone volleyball~\citep{xu2025volleybots, zhang2025mastering}, with substantial efforts dedicated to Sim2Real transitions for real-world success.
A complementary research direction is to boost the sample efficiency of self-play so that policies can be trained online, reducing or even eliminating the need for elaborate Sim2Real pipelines in real-world deployments.

\section{Conclusion}
\label{sec:conclusion}

The burgeoning field of self-play in RL has been systematically explored in this survey. The core idea of self-play, that agents iteratively refine their policies by interacting with historical or concurrent versions of themselves or other evolving agents, has proven to be a powerful approach for developing robust strategies across various domains. Before delving into the specifics of self-play, this paper first elucidates the MARL framework and introduces the game theory background of self-play.
Moreover, the paper presents a unified framework and categorizes existing self-play algorithms into four main categories: traditional self-play algorithms, the PSRO series, the ongoing-training-based series, and the regret-minimization-based series. We provide detailed explanations on how these four groups are seamlessly integrated into our proposed framework, ensuring a comprehensive understanding of their functionalities. 
We then compare the four categories within this common lens and draw design insights that can guide future self-play research.
The transition from theory to application is underscored by a rigorous analysis of self-play's role within various scenarios, such as board games, card games, and video games. 
Despite the successes of self-play in many areas, challenges remain, such as convergence to suboptimal strategies and substantial computational demands. Also, future work can focus on how to integrate self-play with LLMs and achieve real-world applications.
In conclusion, self-play is vital to modern RL research, offering key insights and tools for developing advanced AI systems.
This survey serves as an essential guide for researchers and practitioners, paving the way for further advancements in this field.

\bibliographystyle{ACM-Reference-Format}
\bibliography{main}

\end{document}